\theoremstyle{plain}
\newtheorem{theorem}{Theorem}[section]
\newtheorem{lemma}[theorem]{Lemma}
\theoremstyle{definition}
\newtheorem{assumption}[theorem]{Assumption}
\theoremstyle{remark}
\title{A Dual-Agent Adversarial Framework for Robust Generalization in Deep Reinforcement Learning}
\author{
    Zhengpeng Xie\textsuperscript{\rm 1}\equalcontrib\thanks{Corresponding author.}, Yulong Zhang\textsuperscript{\rm 2}\equalcontrib\\
}
\begin{document}

\maketitle

\begin{abstract}
Recently, empowered with the powerful capabilities of neural networks, reinforcement learning (RL) has successfully tackled numerous challenging tasks. However, while these models demonstrate enhanced decision-making abilities, they are increasingly prone to overfitting. For instance, a trained RL model often fails to generalize to even minor variations of the same task, such as a change in background color or other minor semantic differences. To address this issue, we propose a dual-agent adversarial policy learning framework, which allows agents to spontaneously learn the underlying semantics without introducing any human prior knowledge. Specifically, our framework involves a game process between two agents: each agent seeks to maximize the impact of perturbing on the opponent's policy by producing representation differences for the same state, while maintaining its own stability against such perturbations. This interaction encourages agents to learn generalizable policies, capable of handling irrelevant features from the high-dimensional observations. Extensive experimental results on the Procgen benchmark demonstrate that the adversarial process significantly improves the generalization performance of both agents, while also being applied to various RL algorithms, e.g., Proximal Policy Optimization (PPO). With the adversarial framework, the RL agent outperforms the baseline methods by a significant margin, especially in hard-level tasks, marking a significant step forward in the generalization capabilities of deep reinforcement learning.
\end{abstract}

\section{Introduction}
Reinforcement Learning (RL) has emerged as a powerful paradigm for solving complex decision-making problems, leveraging an agent's ability to learn from interactions with an environment through trial and error \citep{sutton2018reinforcement}. However, generalization between tasks remains difficult for state-of-the-art deep reinforcement learning algorithms. Although trained agents can solve complex tasks, they often struggle to transfer their experience to new environments. For instance, an agent trained in a specific environment struggles to perform effectively in another, even when the only difference between environments is a subtle alteration, such as the change of colors in the scene \citep{cobbe2019quantifying, cobbe2020leveraging}. This limitation underscores the challenges of transferring knowledge across different contexts, emphasizing the importance of developing robust generalization strategies for RL applications in dynamic and variable real-world scenarios \citep{korkmaz2024survey}.

\begin{figure*}[!t]
	\centering
	\includegraphics[width=0.85\textwidth]{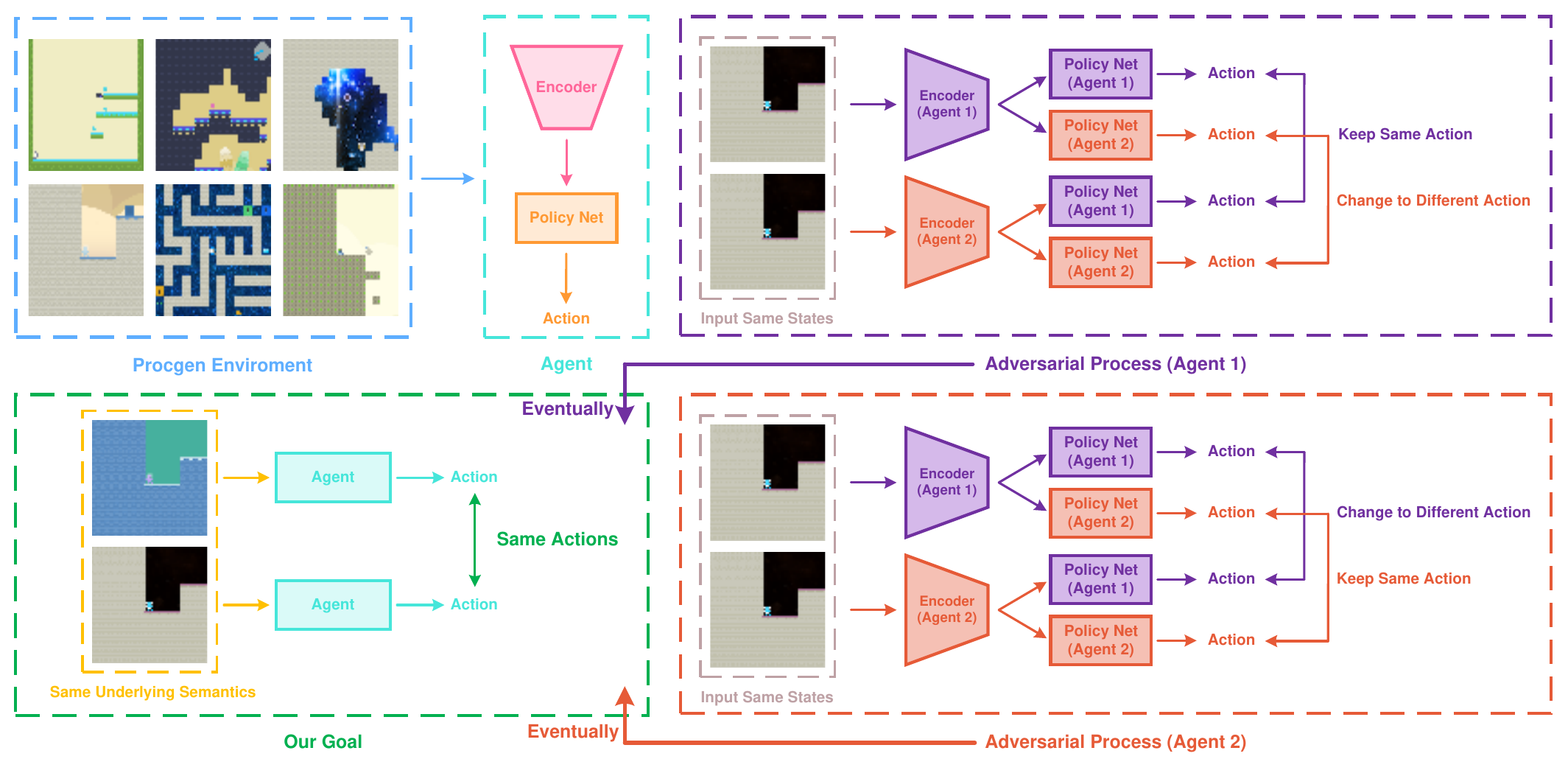}
	\caption{\textbf{Overview of the adversarial process.} Our method involves a game process between two homogeneous agents, as shown in the figure. The training samples are simultaneously input into the encoders of both agents, resulting in differing representations for the same observation. By adjusting the parameters of the two encoders, both agents aim to ensure that their own policy networks are robust to such differences while maximizing the influence of these differences on the other agent's policy network as much as possible. This minimax game process will eventually allow robust policy learning, preventing agents from overfitting to irrelevant features in high-dimensional observations, thereby enhancing generalization performance.}
	\label{fig:framework}
\end{figure*}

One approach to enhancing generalization in RL focuses on data augmentation techniques \citep{lee2019network, laskin2020reinforcement, zhang2021generalization}, which increase the diversity of training data by modifying input observations or environmental conditions. While this provides a straightforward solution, it can introduce biases that do not align with RL objectives and often neglect the nuances of the RL process, potentially limiting effectiveness.
Another approach involves regularizing the learned functions, drawing from traditional techniques used in deep neural networks, such as batch normalization \citep{liu2019regularization}, contrastive learning \citep{agarwal2021contrastive}, and loss function regularization \citep{amit2020discount}. However, these methods can not adequately address the unique challenges of RL, as they often focus on static representations rather than the dynamic nature of agent-environment interactions. Consequently, both data augmentation and traditional regularization methods have limitations that hinder their ability to facilitate effective generalization in RL.

Adversarial learning \citep{pinto2017robust, zhang2020robust, oikarinen2021robust, li2021domain, rahman2023adversarial} presents a promising direction for enhancing generalization in RL by learning robust representations of irrelevant features through an adversarial process. This framework facilitates the development of agents capable of adapting to new environments by emphasizing the distinction between relevant and irrelevant information. While adversarial learning frameworks integrate the RL process, existing methods often rely on introducing generator and discriminator networks \citep{goodfellow2014generative} or seek to modify fundamental parameters of the simulation environments. Such heterogeneous adversarial processes introduce additional hyperparameters and training costs, necessitating carefully designed architectures. These complexities make it challenging to establish a unified framework for generalization tasks across diverse domains.

To address the generalization problem in RL, in this paper, we propose a novel adversarial learning framework, which involves a game process between two homogeneous agents (see Figure \ref{fig:framework}). This framework offers three key advantages: (i) First, this general framework can integrate well with existing policy learning algorithms such as Proximal Policy Optimization (PPO) \citep{schulman2017proximal}. (ii) Second, the adversarial process allows agents to spontaneously learn the underlying semantics without necessitating additional human prior knowledge, thus fostering robust generalization performance. (iii) Lastly, our approach introduces minimal additional hyperparameters, highlighting its potential for widespread applicability across various RL models. Extensive experiments demonstrate that our adversarial framework significantly improves generalization performance in Procgen \citep{cobbe2020leveraging}, particularly in hard-level environments. This framework marks a significant advancement in addressing generalization challenges in deep reinforcement learning.

Our contributions are summarized as follows:
\begin{itemize}
	\item We demonstrate that minimizing the policy's robustness to irrelevant features helps improve generalization performance. 
	\item We propose a general adversarial learning framework to improve the generalization performance of agents, which is compatible with existing policy learning algorithms.
	\item Extensive results demonstrate that applying the adversarial framework to standard RL baselines gains significant improvements in generalization performance.
\end{itemize}

\section{Preliminaries}

\textbf{Markov Decision Process and Generalization.} We first consider the formalization of generalization in RL. Denote a Markov Decision Process (MDP) as $m$, defined by the tuple
\begin{equation}
	m=\left(\mathcal{S}_m,\mathcal{A},r_m,\mathcal{P}_m,\rho_m,\gamma\right),
\end{equation}
where $m$ is sampled from the distribution $p_{\mathcal{M}}(\cdot)$, $\mathcal{S}_m$ represents the state space, $\mathcal{A}$ represents the action space, $r_m:\mathcal{S}_m\times\mathcal{A}\mapsto\mathbb{R}$ is the reward function, $\mathcal{P}_m:\mathcal{S}_m\times\mathcal{A}\times\mathcal{S}_m\mapsto\left[0,1\right]$ is the probability distribution of the state transition function, $\rho_m:\mathcal{S}_m\mapsto\left[0,1\right]$ is the probability distribution of the initial state, and $\gamma\in(0,1]$ is the discount factor. Typically, during training, the agent is only allowed to access $\mathcal{M}_{\mathrm{train}}\subset\mathcal{M}$ and is then tested for its generalization performance by extending to the entire distribution $\mathcal{M}$. The agent generates the following trajectory on $m$:
\begin{equation}
	\tau_m=\left(s_0^m,a_0^m,r_0^m,\dots,s_t^m,a_t^m,r_t^m,\dots\right).
\end{equation}
Similar to standard RL, the state-value function, value function can be defined as
\begin{equation}
	\begin{split}
		Q_m^{\pi}(s_t^m,a_t^m)&=\mathbb{E}_{s_{t+1}^m,a_{t+1}^m,\dots}\left[\sum_{k=0}^{\infty}\gamma^kr_m(s_{t+k}^m,a_{t+k}^m)\right],\\V_m^{\pi}(s_t^m)&=\mathbb{E}_{a_t^m\sim\pi(\cdot|s_t^m)}\left[Q_m^{\pi}(s_t^m,a_t^m)\right].\\
	\end{split}
\end{equation}
Given $Q_m^{\pi}$ and $V_m^{\pi}$, the advantage function can be expressed as $A_m^{\pi}(s_t^m,a_t^m)=Q_m^{\pi}(s_t^m,a_t^m)-V_m^{\pi}(s_t^m)$. We now denote $\zeta(\pi)=\mathbb{E}_{m\sim p_\mathcal{M}(\cdot),\tau_m\sim\pi}\left[\sum_{t=0}^{\infty}\gamma^tr_m(s_t^m,a_t^m)\right]$ as the generalization objective given policy $\pi$, and denote $\eta(\pi)=\mathbb{E}_{m\sim p_{\mathcal{M}_{\mathrm{train}}}(\cdot),\tau_m\sim\pi}\left[\sum_{t=0}^{\infty}\gamma^tr_m(s_t^m,a_t^m)\right]$ as the training objective, where the notation $\mathbb{E}_{\tau_m\sim\pi}$ indicates the expected return of the trajectory $\tau_m$ generated by the agent following policy $\pi$, i.e., $s_0^m\sim\rho_m(\cdot),a_t^m\sim\pi(\cdot|s_t^m),r_t^m\sim r_m(s_t^m,a_t^m),s_{t+1}^m\sim\mathcal{P}_m(\cdot|s_t^m,a_t^m)$, where $t\in\mathbb{N}$, $\mathbb{N}$ is the set of all natural numbers.

For the convenience of subsequent theoretical analysis, we decouple the state $s_t^m$ into $u_t$ and $\phi_m(\cdot)$, i.e., $s_t^m=\phi_m(u_t)$, where $u_t$ is independent of $m$, while $\phi_m(\cdot)$ is completely and only determined by $m$. For instance, $u_t$ implicitly encompasses significant semantic information, which is crucial for the agent to maximize the expected return. This includes, for example, the relative positional relationship between the manipulated character and obstacles in its surroundings. On the other hand, the function $\phi_m$ obfuscates these pieces of information, such as the background or rendering style of the game. This suggests that even two vastly different states may represent identical semantics, making it seemingly implausible for an agent utilizing a Convolutional Neural Network (CNN) for feature extraction to maintain robustness against such variations.

Therefore, the generalization of reinforcement learning has been proven to be highly challenging \citep{ghosh2021generalization}, as the agent may use the additional information provided by $\phi_m$ to ``cheat''. In some extreme cases, the agent can achieve high scores solely by memorizing these additional pieces of information, while lacking any comprehension of the underlying semantics, which can further lead to the agent completely failing on unseen $m\sim p_{\mathcal{M}}(\cdot)$.

Hence, we attempt to eliminate the influence of $m$. We consider a Hidden Markov Decision Process (HMDP) that consists entirely of useful information (in other words, all variables that can be affected by $m$ are excluded from consideration), denote it as $m^*=\left(\mathcal{U},\mathcal{A},r,\mathcal{P},\rho,\gamma\right)$.

\section{Theoretical Analysis}
In this section, we derive the lower bounds for the training and generalization performance of the agent. The main conclusion drawn from this is that improving the agent's robustness to irrelevant features will help enhance its generalization performance.

Given the probability distribution $p_{\mathcal{M}}$, we first make the following assumption:
\begin{assumption}\label{Assumption of M_train}
	When $m$ is sampled from $\mathcal{M}_{\mathrm{train}}\subset\mathcal{M}$, i.e., $m\sim p_{\mathcal{M}_{\mathrm{train}}}(\cdot)$, we assume that
	\begin{equation}
		p_{\mathcal{M}_{\mathrm{train}}}(m)=\frac{p_{\mathcal{M}}(m)\cdot\mathbb{I}\left(m\in\mathcal{M}_{\mathrm{train}}\right)}{M},
	\end{equation}
	where $M=\int_{\mathcal{M}_{\mathrm{train}}}p_{\mathcal{M}}(m)\mathrm{d}m$ is the normalized coefficient that represents the probability that $m$, sampled from the entire distribution $\mathcal{M}$, belongs to $\mathcal{M}_{\mathrm{train}}$, while $\mathbb{I}(\cdot)$ is the indicator function.
\end{assumption}

It can be proved that $p_{\mathcal{M}_{\mathrm{train}}}(m)$ is a probability distribution, please refer to Appendix \ref{proof 1} for details. Based on Assumption \ref{Assumption of M_train}, we can derive the following generalization theorem:
\begin{theorem}[Generalization performance lower bound]\label{lower bound 1}
	Given any policy $\pi$, the following bound holds:
	\begin{equation}\label{Generalization performance lower bound}
		\zeta(\pi)\geq\eta(\pi)-\frac{2r_{\max}}{1-\gamma}\cdot(1-M),
	\end{equation}
	where $\zeta(\pi)$ and $\eta(\pi)$ denote the generalization objective and training objective, respectively; $r_{\max}=\max_{m,s,a}\left|r_m(s,a)\right|$.
\end{theorem}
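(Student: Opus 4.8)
The plan is to reduce the statement to a bounded-difference estimate between two averages of the same per-environment return. First I would define, for a fixed policy $\pi$ and each MDP $m$, the scalar return $G(m)=\mathbb{E}_{\tau_m\sim\pi}\left[\sum_{t=0}^{\infty}\gamma^t r_m(s_t^m,a_t^m)\right]$, so that by definition $\zeta(\pi)=\int_{\mathcal{M}}p_{\mathcal{M}}(m)G(m)\,\mathrm{d}m$ and $\eta(\pi)=\int_{\mathcal{M}}p_{\mathcal{M}_{\mathrm{train}}}(m)G(m)\,\mathrm{d}m$. The crucial elementary observation is that every $G(m)$ is uniformly bounded: since $|r_m(s,a)|\le r_{\max}$ for all $m,s,a$, the geometric sum gives $|G(m)|\le\sum_{t=0}^{\infty}\gamma^t r_{\max}=\frac{r_{\max}}{1-\gamma}$. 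This single bound is what drives the whole estimate.

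Next I would use Assumption~\ref{Assumption of M_train} to rewrite $\eta$ in terms of $p_{\mathcal{M}}$. Substituting $p_{\mathcal{M}_{\mathrm{train}}}(m)=p_{\mathcal{M}}(m)\mathbb{I}(m\in\mathcal{M}_{\mathrm{train}})/M$ gives $M\,\eta(\pi)=\int_{\mathcal{M}_{\mathrm{train}}}p_{\mathcal{M}}(m)G(m)\,\mathrm{d}m$, i.e.\ the unnormalized mass of $\zeta$ restricted to the training region equals $M\eta(\pi)$. Splitting the integral defining $\zeta$ over $\mathcal{M}_{\mathrm{train}}$ and its complement then yields the exact identity
\begin{equation}
	\zeta(\pi)=M\,\eta(\pi)+\int_{\mathcal{M}\setminus\mathcal{M}_{\mathrm{train}}}p_{\mathcal{M}}(m)G(m)\,\mathrm{d}m,
\end{equation}
so that $\zeta(\pi)-\eta(\pi)=-(1-M)\eta(\pi)+\int_{\mathcal{M}\setminus\mathcal{M}_{\mathrm{train}}}p_{\mathcal{M}}(m)G(m)\,\mathrm{d}m$.

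Finally I would lower-bound the two remaining terms separately. Because $\eta(\pi)$ is itself an average of the $G(m)$ values, it inherits the same bound $|\eta(\pi)|\le r_{\max}/(1-\gamma)$, and since $1-M\ge 0$ we get $-(1-M)\eta(\pi)\ge-(1-M)\frac{r_{\max}}{1-\gamma}$. For the residual integral, using $G(m)\ge-r_{\max}/(1-\gamma)$ together with $\int_{\mathcal{M}\setminus\mathcal{M}_{\mathrm{train}}}p_{\mathcal{M}}(m)\,\mathrm{d}m=1-M$ gives the matching bound $-(1-M)\frac{r_{\max}}{1-\gamma}$. Adding the two contributions produces exactly $\zeta(\pi)-\eta(\pi)\ge-\frac{2r_{\max}}{1-\gamma}(1-M)$, which is the claim.

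I do not expect a genuine obstacle here: the argument is a routine total-variation-style estimate, and the only points requiring care are the correct normalization when passing between $p_{\mathcal{M}}$ and $p_{\mathcal{M}_{\mathrm{train}}}$ (where the factor $M$ must be tracked exactly, since it is what prevents the naive bound from being off) and the direction of the inequalities when the sign of $-(1-M)\eta(\pi)$ is handled. If anything is subtle, it is recognizing that both error terms independently contribute a full $\frac{r_{\max}}{1-\gamma}(1-M)$, which is precisely why the constant in the stated bound is $2r_{\max}$ rather than $r_{\max}$.
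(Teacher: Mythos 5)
Your proposal is correct and follows essentially the same route as the paper's proof: both rest on the uniform per-environment bound $|G(m)|\le r_{\max}/(1-\gamma)$ and the same splitting of the integral over $\mathcal{M}_{\mathrm{train}}$ and its complement, your term $-(1-M)\eta(\pi)$ being exactly the paper's term $\left(1-\tfrac{1}{M}\right)\int_{\mathcal{M}_{\mathrm{train}}}p_{\mathcal{M}}(m)\,g_m(\pi)\,\mathrm{d}m$ after substituting $\int_{\mathcal{M}_{\mathrm{train}}}p_{\mathcal{M}}(m)\,g_m(\pi)\,\mathrm{d}m=M\eta(\pi)$. The only cosmetic difference is that the paper bounds the two-sided quantity $\left|\zeta(\pi)-\eta(\pi)\right|$ via the triangle inequality (yielding an upper bound on $\zeta$ as well), whereas you establish only the one-sided lower bound that the theorem actually requires.
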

The proof is in Appendix \ref{proof 2}. This inspires us that when sampling $m$ from the entire $\mathcal{M}$, with the increase of $M$ (i.e., the probability of the sampled $m\in\mathcal{M}_{\mathrm{train}}$), the lower bound of generalization performance is continuously optimized and tends to be consistent with $\zeta$ when $M=1$.

According to Theorem \ref{lower bound 1}, once $\mathcal{M}_{\mathrm{train}}$ is determined, the value of $M$ is also fixed, at this point, $\eta$ is the only term that we can optimize in the lower bound. Therefore, we now focus on optimizing $\eta$. Before that, we present some important theoretical results in the following:
\begin{theorem}\label{kakade2002approximately}
	\citep{kakade2002approximately} Let $\mathbb{P}(s_t=s|\pi)$ represents the probability of the $t$-th state equals to $s$ in trajectories generated by the agent following policy $\pi$, and $\rho_{\pi}(s)=\sum_{t=0}^{\infty}\gamma^t\mathbb{P}(s_t=s|\pi)$ represents the unnormalized discounted visitation frequencies. Given any two policies, $\pi$ and $\tilde{\pi}$, their performance difference can be measured by
	\begin{equation}
		\eta(\tilde{\pi})=\eta(\pi)+\mathbb{E}_{s\sim\rho_{\tilde{\pi}}(\cdot),a\sim\tilde{\pi}(\cdot|s)}\left[A^{\pi}(s,a)\right].
	\end{equation}
\end{theorem}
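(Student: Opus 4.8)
The plan is to prove this identity (the standard performance-difference lemma of \citet{kakade2002approximately}) by expanding the right-hand side and telescoping the value-function terms, all within the HMDP $m^{*}$ so that the $m$-subscripts are dropped. The starting point is to rewrite the advantage through its one-step (temporal-difference) form. Since $Q^{\pi}(s,a)=r(s,a)+\gamma\,\mathbb{E}_{s'\sim\mathcal{P}(\cdot|s,a)}\big[V^{\pi}(s')\big]$ by the Bellman consistency of $Q^{\pi}$, and $A^{\pi}(s,a)=Q^{\pi}(s,a)-V^{\pi}(s)$, we obtain
\[
A^{\pi}(s,a)=r(s,a)+\gamma\,\mathbb{E}_{s'\sim\mathcal{P}(\cdot|s,a)}\big[V^{\pi}(s')\big]-V^{\pi}(s).
\]

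First I would unfold the (unnormalized) visitation-frequency expectation into a discounted sum along the trajectories generated by $\tilde{\pi}$. By the definition $\rho_{\tilde\pi}(s)=\sum_{t=0}^{\infty}\gamma^{t}\mathbb{P}(s_t=s\mid\tilde\pi)$, the right-hand term equals
\[
\mathbb{E}_{s\sim\rho_{\tilde\pi}(\cdot),\,a\sim\tilde\pi(\cdot|s)}\big[A^{\pi}(s,a)\big]=\mathbb{E}_{\tau\sim\tilde\pi}\left[\sum_{t=0}^{\infty}\gamma^{t}A^{\pi}(s_t,a_t)\right],
\]
where $\tau\sim\tilde\pi$ draws $s_0\sim\rho(\cdot)$ and then follows $\tilde\pi$ together with $\mathcal{P}$. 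Substituting the TD form of the advantage and splitting the sum, the reward terms assemble into $\mathbb{E}_{\tau\sim\tilde\pi}\big[\sum_{t}\gamma^{t}r(s_t,a_t)\big]=\eta(\tilde\pi)$. For the value terms, note that under $\tilde\pi$ the next state $s_{t+1}$ is exactly distributed as $\mathcal{P}(\cdot|s_t,a_t)$, so the residual telescopes:
\[
\sum_{t=0}^{\infty}\gamma^{t}\big(\gamma V^{\pi}(s_{t+1})-V^{\pi}(s_t)\big)=\sum_{t=0}^{\infty}\big(\gamma^{t+1}V^{\pi}(s_{t+1})-\gamma^{t}V^{\pi}(s_t)\big)=-V^{\pi}(s_0),
\]
leaving $-\mathbb{E}_{s_0\sim\rho}\big[V^{\pi}(s_0)\big]=-\eta(\pi)$. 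Combining the two pieces gives $\mathbb{E}_{s\sim\rho_{\tilde\pi},\,a\sim\tilde\pi}\big[A^{\pi}(s,a)\big]=\eta(\tilde\pi)-\eta(\pi)$, which rearranges to the claimed identity.

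The main obstacle is rigor at the infinite horizon: I must justify that the telescoping series converges, that the boundary residual $\gamma^{t}V^{\pi}(s_t)\to 0$ as $t\to\infty$, and that Fubini's theorem permits exchanging the expectation with the infinite sum. All of these follow from boundedness of the reward, $|r|\le r_{\max}$, which forces $|V^{\pi}|\le r_{\max}/(1-\gamma)$ and hence absolute summability of the $\gamma^{t}$-weighted series whenever $\gamma<1$ (the boundary case $\gamma=1$ is handled by the usual episodic/limiting argument). Once convergence is secured, the remaining steps are routine bookkeeping of expectations over the trajectory distribution induced by $\tilde\pi$.
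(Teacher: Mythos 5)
The paper does not actually prove this statement: it is imported by citation as the performance-difference lemma of \citet{kakade2002approximately}, so there is no in-paper proof to compare against. Your argument is correct and is precisely the canonical proof from that reference: rewrite $A^{\pi}$ in its one-step TD form, unfold $\rho_{\tilde\pi}$ into a discounted sum over trajectories of $\tilde\pi$, let the value terms telescope to $-V^{\pi}(s_0)$, and use $|V^{\pi}|\le r_{\max}/(1-\gamma)$ to justify the interchange of expectation and summation and the vanishing boundary term $\gamma^{t}V^{\pi}(s_t)\to 0$.
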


\begin{theorem}\label{schulman2015trust}
	\citep{schulman2015trust} Given any two policies, $\pi$ and $\tilde{\pi}$, the following bound holds:
	\begin{equation}
		\eta(\tilde{\pi})\geq L_{\pi}(\tilde{\pi})-\frac{4\gamma\max_{s,a}\left|A^{\pi}(s,a)\right|}{(1-\gamma)^2}\cdot D_{\mathrm{TV}}^{\max}(\pi,\tilde{\pi})^2,
	\end{equation}
	where $L_{\pi}(\tilde{\pi})=\eta(\pi)+\mathbb{E}_{s\sim\rho_{\pi}(\cdot),a\sim\tilde{\pi}(\cdot|s)}\left[A^{\pi}(s,a)\right]$.
\end{theorem}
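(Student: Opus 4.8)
The plan is to reduce the claim to controlling a single discrepancy: the difference between evaluating the expected advantage under the \emph{new} policy's state distribution $\rho_{\tilde\pi}$ versus the \emph{old} policy's $\rho_\pi$. Write $M:=\max_{s,a}|A^\pi(s,a)|$, $\alpha:=D_{\mathrm{TV}}^{\max}(\pi,\tilde\pi)$, and abbreviate the state-averaged advantage as $\bar{A}(s):=\mathbb{E}_{a\sim\tilde\pi(\cdot|s)}\left[A^\pi(s,a)\right]$. By Theorem \ref{kakade2002approximately} we have $\eta(\tilde\pi)=\eta(\pi)+\sum_{s}\rho_{\tilde\pi}(s)\bar{A}(s)$, while by definition $L_\pi(\tilde\pi)=\eta(\pi)+\sum_{s}\rho_\pi(s)\bar{A}(s)$. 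Unfolding each unnormalized visitation frequency into its per-timestep form $\rho_\pi(s)=\sum_{t\ge0}\gamma^t\mathbb{P}(s_t=s\mid\pi)$, the theorem is equivalent to showing $\bigl|\eta(\tilde\pi)-L_\pi(\tilde\pi)\bigr|\le\sum_{t\ge0}\gamma^t\,\bigl|\mathbb{E}_{s_t\sim\tilde\pi}[\bar{A}(s_t)]-\mathbb{E}_{s_t\sim\pi}[\bar{A}(s_t)]\bigr|$ is no larger than the stated penalty.

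The first ingredient is a pointwise bound on $\bar{A}$. Since the advantage averages to zero under the policy that generated it, $\mathbb{E}_{a\sim\pi(\cdot|s)}[A^\pi(s,a)]=0$, so I can subtract this to get $\bar{A}(s)=\sum_{a}\bigl(\tilde\pi(a|s)-\pi(a|s)\bigr)A^\pi(s,a)$. Bounding by the $\ell_1$ distance between the conditional action distributions and recalling $\sum_a|\tilde\pi(a|s)-\pi(a|s)|=2\,D_{\mathrm{TV}}(\pi(\cdot|s),\tilde\pi(\cdot|s))\le2\alpha$ yields $|\bar{A}(s)|\le2\alpha M$ uniformly in $s$.

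The central step is a coupling argument to bound each per-timestep difference. I would invoke the fact that $D_{\mathrm{TV}}$ equals the minimal disagreement probability of a coupling, so for every state there is a joint law over action pairs $(a,\tilde a)$ with $\mathbb{P}(a\ne\tilde a)\le\alpha$; running the two policies through the \emph{shared} initial distribution $\rho$ and transition kernel $\mathcal{P}$ under this coupling produces synchronized trajectories. By induction on $t$, as long as the coupled pair has selected identical actions at all steps $0,\dots,t-1$ the realized states $s_t$ coincide, and this agreement event has probability at least $(1-\alpha)^t$. On the agreement event the contribution $\bar{A}(\tilde s_t)-\bar{A}(s_t)$ vanishes; on its complement it is bounded by $|\bar{A}(\tilde s_t)|+|\bar{A}(s_t)|\le4\alpha M$. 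Hence $\bigl|\mathbb{E}_{s_t\sim\tilde\pi}[\bar{A}(s_t)]-\mathbb{E}_{s_t\sim\pi}[\bar{A}(s_t)]\bigr|\le\bigl(1-(1-\alpha)^t\bigr)\,4\alpha M$. I expect this synchronization claim to be the main obstacle: one must verify that the coupling can be propagated through the dynamics so that \emph{identical action histories force identical states}, which relies on the two MDPs sharing $\rho$ and $\mathcal{P}$ and on $D_{\mathrm{TV}}^{\max}$ supplying a single uniform $\alpha$ valid at every state.

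Finally I would sum the geometric series. Using $\sum_{t\ge0}\gamma^t\bigl(1-(1-\alpha)^t\bigr)=\frac{1}{1-\gamma}-\frac{1}{1-\gamma(1-\alpha)}=\frac{\gamma\alpha}{(1-\gamma)\bigl(1-\gamma(1-\alpha)\bigr)}$ and the crude estimate $1-\gamma(1-\alpha)\ge1-\gamma$, the sum is at most $\frac{\gamma\alpha}{(1-\gamma)^2}$. Multiplying by the per-timestep constant $4\alpha M$ gives $\bigl|\eta(\tilde\pi)-L_\pi(\tilde\pi)\bigr|\le\frac{4\gamma M\alpha^2}{(1-\gamma)^2}$, and substituting $\alpha^2=D_{\mathrm{TV}}^{\max}(\pi,\tilde\pi)^2$ and $M=\max_{s,a}|A^\pi(s,a)|$ yields the desired lower bound after dropping the absolute value on the favorable side.
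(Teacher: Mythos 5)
The paper never proves this statement: Theorem~\ref{schulman2015trust} is imported as a known result from \citet{schulman2015trust} and used as a black box in the proof of Theorem~\ref{lower bound 2} (Appendix~\ref{proof 3}), so there is no in-paper proof to compare against. Your reconstruction is correct, and it is essentially the original argument of Schulman et al.\ (Theorem~1 and Appendix~A of the TRPO paper): the performance-difference identity from Theorem~\ref{kakade2002approximately}, the pointwise bound $|\bar A(s)|\le 2\alpha M$ obtained from $\mathbb{E}_{a\sim\pi(\cdot|s)}\left[A^{\pi}(s,a)\right]=0$ together with the $\ell_1$ characterization of total variation, the maximal-coupling construction in which sustained action agreement (probability at least $(1-\alpha)^t$, propagated through the shared $\rho$ and $\mathcal{P}$) forces identical states and hence a per-timestep error of at most $4\alpha M\bigl(1-(1-\alpha)^t\bigr)$, and the series evaluation $\sum_{t\ge 0}\gamma^t\bigl(1-(1-\alpha)^t\bigr)=\frac{\gamma\alpha}{(1-\gamma)\left(1-\gamma(1-\alpha)\right)}\le\frac{\gamma\alpha}{(1-\gamma)^2}$ all check out. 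Two minor points of rigor: the theorem is not \emph{equivalent} to bounding your displayed per-timestep sum (that bound is merely sufficient, via the triangle inequality); and in the coupling one should also specify how action pairs are drawn once the two chains have diverged (e.g., independently from each policy at its own current state), which preserves the correct marginal laws and is all the argument needs.
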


The aforementioned theorems only consider standard RL. On this foundation, we further extend them and derive a lower bound for the training objective:
\begin{theorem}[Training performance lower bound]\label{lower bound 2}
	Let $\mathbb{P}(s_t^m=s|m,\pi)$ represents the probability of the $t$-th state equals to $s$ in trajectories generated by the agent following policy $\pi$ in MDP $m$, and $\rho_{\pi}^{m}(s)=\sum_{t=0}^{\infty}\gamma^t\mathbb{P}(s_t^m=s|m,\pi)$ represents the unnormalized discounted visitation frequencies. Given any two policies, $\pi$ and $\tilde{\pi}$, the following bound holds:
	\begin{equation}
		\eta(\tilde{\pi})\geq L_{\pi}(\tilde{\pi})-\frac{4\gamma A_{\max}}{(1-\gamma)^2}\cdot\left(\sqrt{\mathfrak{D}_1}+\sqrt{\mathfrak{D}_2}+\sqrt{\mathfrak{D}_3}\right)^2,
	\end{equation}
	where $A_{\max}=\max_{m,s,a}\left|A_m^{\pi}(s,a)\right|$, and
	\begin{equation}
		\begin{split}
			&\eta(\tilde{\pi})=\eta(\pi)+\mathbb{E}_{m\sim p_{\mathcal{M}_{\mathrm{train}}}(\cdot),s\sim\rho_{\tilde{\pi}}^m(\cdot),a\sim\tilde{\pi}(\cdot|s)}\left[A_m^{\pi}(s,a)\right],\\
			&L_{\pi}(\tilde{\pi})=\eta(\pi)+\mathbb{E}_{m\sim p_{\mathcal{M}_{\mathrm{train}}}(\cdot),s\sim\rho_{\pi}^m(\cdot),a\sim\tilde{\pi}(\cdot|s)}\left[A_m^{\pi}(s,a)\right],\\
			&\mathfrak{D}_1=\mathbb{E}_{m\sim p_{\mathcal{M}_{\mathrm{train}}}(\cdot)}\left\{D_{\mathrm{TV}}^{\max}\left[\pi(\cdot|\phi_m(u)),\tilde{\pi}(\cdot|\phi_m(u))\right]^2\right\},\\
			&\mathfrak{D}_2=\mathbb{E}_{m,\tilde{m}\sim p_{\mathcal{M}_{\mathrm{train}}}(\cdot)}\left\{D_{\mathrm{TV}}^{\max}\left[\pi(\cdot|\phi_m(u)),\pi(\cdot|\phi_{\tilde{m}}(u))\right]^2\right\},\\
			&\mathfrak{D}_3=\mathbb{E}_{m,\tilde{m}\sim p_{\mathcal{M}_{\mathrm{train}}}(\cdot)}\left\{D_{\mathrm{TV}}^{\max}\left[\tilde{\pi}(\cdot|\phi_m(u)),\tilde{\pi}(\cdot|\phi_{\tilde{m}}(u))\right]^2\right\},\\
		\end{split}
	\end{equation}
	where the notation $D_{\mathrm{TV}}^{\max}(\cdot)=\max_{u}D_{\mathrm{TV}}(\cdot)$.
\end{theorem}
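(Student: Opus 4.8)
The plan is to reduce the multi-MDP training objective to a single application of the trust-region bound (Theorem \ref{schulman2015trust}) evaluated inside the hidden MDP $m^*$, and then to absorb all dependence on the rendering maps $\phi_m$ into a bound on one \emph{effective} total-variation distance, which the triangle inequality splits into the three pieces $\mathfrak{D}_1,\mathfrak{D}_2,\mathfrak{D}_3$. First I would introduce the marginalized policy on $\mathcal{U}$,
\[ \bar{\pi}(\cdot\mid u)=\mathbb{E}_{m\sim p_{\mathcal{M}_{\mathrm{train}}}(\cdot)}\left[\pi(\cdot\mid\phi_m(u))\right], \]
and likewise $\bar{\tilde{\pi}}$, writing $\pi_m(\cdot\mid u)=\pi(\cdot\mid\phi_m(u))$ for the policy induced by a fixed rendering. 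Because the reward $r$ and the transition kernel $\mathcal{P}$ on the useful component $u$ are independent of $m$, running $\pi$ across the training ensemble is equivalent—at the level of the induced distribution over $u$-trajectories and of the returns—to running $\bar{\pi}$ in $m^*$. This identifies $\eta(\pi)$ with the HMDP return of $\bar{\pi}$, matches $A_m^{\pi}$ with the corresponding HMDP advantage (whence $A_{\max}$), and identifies the two averaged quantities in the statement with $\eta^*(\bar{\tilde{\pi}})$ and $L^*_{\bar{\pi}}(\bar{\tilde{\pi}})$, so that the displayed identities for $\eta(\tilde{\pi})$ and $L_\pi(\tilde{\pi})$ become the multi-MDP analogues of Theorem \ref{kakade2002approximately}. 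Establishing this reduction cleanly—reconciling the per-episode sampling of $m$ with the marginalization and verifying the advantage correspondence between the $s$-space and the $u$-space—is the step I expect to be the main obstacle, since the policy-to-return map is nonlinear and the bookkeeping must be done carefully.

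Granting the reduction, I would apply Theorem \ref{schulman2015trust} to the pair $(\bar{\pi},\bar{\tilde{\pi}})$ in $m^*$, which yields precisely the claimed inequality but with the single factor $D_{\mathrm{TV}}^{\max}(\bar{\pi},\bar{\tilde{\pi}})^2$ in place of the three-term bracket. It then remains to prove
\[ D_{\mathrm{TV}}^{\max}(\bar{\pi},\bar{\tilde{\pi}})\le\sqrt{\mathfrak{D}_1}+\sqrt{\mathfrak{D}_2}+\sqrt{\mathfrak{D}_3}. \]
To do this I would fix an arbitrary pivot $m$ and insert it via the triangle inequality for $D_{\mathrm{TV}}$,
\[ D_{\mathrm{TV}}(\bar{\pi},\bar{\tilde{\pi}})\le D_{\mathrm{TV}}(\bar{\pi},\pi_m)+D_{\mathrm{TV}}(\pi_m,\tilde{\pi}_m)+D_{\mathrm{TV}}(\tilde{\pi}_m,\bar{\tilde{\pi}}). \]
The joint convexity of total variation lets me pull the defining expectation over $\tilde{m}$ out of the outer legs, giving $D_{\mathrm{TV}}(\bar{\pi},\pi_m)\le\mathbb{E}_{\tilde{m}}D_{\mathrm{TV}}(\pi_{\tilde{m}},\pi_m)$ and $D_{\mathrm{TV}}(\tilde{\pi}_m,\bar{\tilde{\pi}})\le\mathbb{E}_{\tilde{m}}D_{\mathrm{TV}}(\tilde{\pi}_m,\tilde{\pi}_{\tilde{m}})$, while the middle leg is the same-rendering disagreement feeding $\mathfrak{D}_1$.

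Finally I would take $\max_u$ on both sides—using that the max of a sum is at most the sum of maxes, and that the max of an expectation is at most the expectation of the max—and then average over the pivot $m$; this produces three averaged $D_{\mathrm{TV}}^{\max}$ disagreement terms of exactly the forms appearing in $\mathfrak{D}_2$, $\mathfrak{D}_1$, and $\mathfrak{D}_3$, but \emph{without} the squares. A single application of Cauchy--Schwarz (Jensen), $\mathbb{E}[X]\le\sqrt{\mathbb{E}[X^2]}$, upgrades each averaged term to the square-root-of-expected-square form $\sqrt{\mathfrak{D}_i}$. Squaring the resulting bound and substituting it into the trust-region inequality from Step two yields the stated lower bound. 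Beyond the reduction of Step one, the only delicate point here is the order in which $\max_u$, the convexity expectation, and the Cauchy--Schwarz step are applied, which must be arranged so that every interchange is an inequality in the favorable direction.
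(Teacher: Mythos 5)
There is a genuine gap, and it sits exactly where you predicted: the reduction in your first step is false, not merely delicate. When the agent runs in the training ensemble, a single $m$ is drawn and then held fixed for the \emph{entire} episode, so the distribution over $u$-trajectories has the form
\begin{equation*}
\mathbb{E}_{m\sim p_{\mathcal{M}_{\mathrm{train}}}(\cdot)}\!\left[\textstyle\prod_{t}\pi\!\left(a_t\mid\phi_m(u_t)\right)\,\mathcal{P}(u_{t+1}\mid u_t,a_t)\right],
\end{equation*}
whereas running your marginalized policy $\bar{\pi}$ in $m^*$ produces
$\prod_{t}\mathbb{E}_{m_t}\!\left[\pi\!\left(a_t\mid\phi_{m_t}(u_t)\right)\right]\mathcal{P}(u_{t+1}\mid u_t,a_t)$,
i.e.\ an independent rendering is effectively redrawn at every step. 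Since the expectation of a product is not the product of expectations, these trajectory distributions differ, and $\eta(\pi)\neq\eta^*(\bar{\pi})$ in general (a two-step example where $\pi$ plays $(A,A)$ under $m_1$ and $(B,B)$ under $m_2$, with reward only for the sequence $(A,B)$, already separates them). The same objection breaks the claimed identification of $A_m^\pi$ with the HMDP advantage of $\bar{\pi}$, so the constant $A_{\max}=\max_{m,s,a}|A_m^\pi(s,a)|$ in the theorem cannot be recovered from a trust-region bound stated for $(\bar{\pi},\bar{\tilde{\pi}})$ in $m^*$. A further symptom that the route is wrong: if your reduction were valid, joint convexity of $D_{\mathrm{TV}}$ alone would give $D_{\mathrm{TV}}^{\max}(\bar{\pi},\bar{\tilde{\pi}})^2\leq\mathfrak{D}_1$, yielding a bound strictly stronger than the theorem with no $\mathfrak{D}_2,\mathfrak{D}_3$ at all --- the terms $\mathfrak{D}_2,\mathfrak{D}_3$ exist precisely to pay for the cross-$m$ inconsistency that the marginalization trick tries to erase.

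The paper's proof avoids this trap by never leaving the per-$m$ world until after the trust-region bound has been applied: for each fixed $m$ the whole episode lives inside that single MDP, so Theorem \ref{schulman2015trust} applies verbatim to the pair $\left(\pi(\cdot\mid\phi_m(\cdot)),\tilde{\pi}(\cdot\mid\phi_m(\cdot))\right)$; the resulting family of inequalities is then averaged over $m\sim p_{\mathcal{M}_{\mathrm{train}}}(\cdot)$ (Lemma \ref{lemma 1}), which is a linear operation and hence unproblematic. Only afterwards is the per-$m$ deviation $D_{\mathrm{TV}}^{\max}\left[\pi(\cdot\mid\phi_m(u)),\tilde{\pi}(\cdot\mid\phi_m(u))\right]$ split --- at the maximizing $u^*$, via the triangle inequality through the pivot $\phi_{\tilde{m}}(u^*)$ and Cauchy--Schwarz on the cross terms --- into the three quantities $\mathfrak{D}_1,\mathfrak{D}_2,\mathfrak{D}_3$. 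Your second and third steps (triangle inequality, convexity, $\mathbb{E}[X]\leq\sqrt{\mathbb{E}[X^2]}$) are sound in isolation and close in spirit to that final splitting, but they are applied to the wrong object; to repair the proof you should discard the marginalized policy entirely and run your splitting argument on the per-$m$ TV term that survives the averaged trust-region bound.
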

The proof see Appendix \ref{proof 3}. This inspires us that $\mathfrak{D}_1$ measures the difference between the old and new policies, while $\mathfrak{D}_2$ and $\mathfrak{D}_3$ represent the robustness of the old and new policies to irrelevant features of the high-dimensional observations, respectively. Thus
\begin{equation}\label{non-decreasing}
	\eta(\tilde{\pi})-\eta(\pi)\geq L_{\pi}(\tilde{\pi})-\eta(\pi)-C\cdot\left(\sqrt{\mathfrak{D}_1}+\sqrt{\mathfrak{D}_2}+\sqrt{\mathfrak{D}_3}\right)^2,
\end{equation}
where $C=4\gamma A_{\max}/(1-\gamma)^2$. We now denote $M_{\pi}(\tilde{\pi})=C\cdot\left(\sqrt{\mathfrak{D}_1}+\sqrt{\mathfrak{D}_2}+\sqrt{\mathfrak{D}_3}\right)^2$, we can then derive the following monotonic improvement theorem:
\begin{theorem}[Monotonic improvement of training performance]
	Let $\pi_0,\pi_1,\pi_2,\dots,\pi_k$ be the sequence of policies generated by Algorithm \ref{Policy iteration algorithm}, then
	\begin{equation}
		\eta(\pi_{k})\geq\eta(\pi_{k-1})\geq\cdots\geq\eta(\pi_0).
	\end{equation}
\end{theorem}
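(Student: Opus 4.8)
The plan is to reduce the chain to a single-step improvement and argue by induction: it suffices to prove $\eta(\pi_{k+1}) \ge \eta(\pi_k)$ for each $k$, after which composing these inequalities yields the stated ordering. The engine is the bound of Theorem~\ref{lower bound 2}, which rearranges (as in \eqref{non-decreasing}) to $\eta(\tilde\pi)\ge L_\pi(\tilde\pi)-M_\pi(\tilde\pi)$ for any anchor $\pi$ and any $\tilde\pi$. Taking $\pi=\pi_k$ and $\tilde\pi=\pi_{k+1}$ gives $\eta(\pi_{k+1})\ge L_{\pi_k}(\pi_{k+1})-M_{\pi_k}(\pi_{k+1})$, so everything reduces to showing the right-hand side is at least $\eta(\pi_k)$.

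First I would invoke the defining property of Algorithm~\ref{Policy iteration algorithm}: because $\pi_{k+1}$ is obtained by maximizing the surrogate $L_{\pi_k}(\tilde\pi)-M_{\pi_k}(\tilde\pi)$ and the anchor $\pi_k$ is itself a feasible candidate, we get $L_{\pi_k}(\pi_{k+1})-M_{\pi_k}(\pi_{k+1})\ge L_{\pi_k}(\pi_k)-M_{\pi_k}(\pi_k)$. Evaluating the surrogate at the anchor, the term $L_{\pi_k}(\pi_k)$ collapses to $\eta(\pi_k)$, since the correction $\mathbb{E}_{m,s\sim\rho_{\pi_k}^m,a\sim\pi_k}[A_m^{\pi_k}(s,a)]$ vanishes (the expected advantage of a policy against itself is zero at every state). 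Hence $\eta(\pi_{k+1})\ge \eta(\pi_k)-M_{\pi_k}(\pi_k)$, and the whole theorem reduces to controlling the penalty $M_{\pi_k}(\pi_k)$ at the anchor.

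The main obstacle is exactly this penalty. In the classical trust-region argument of Theorem~\ref{schulman2015trust} the penalty is $D_{\mathrm{TV}}^{\max}(\pi,\tilde\pi)^2$ and vanishes identically at $\tilde\pi=\pi$, which is what makes the telescoping exact. Here, however, $M_{\pi_k}(\tilde\pi)$ additionally carries the intrinsic robustness terms $\mathfrak{D}_2$ and $\mathfrak{D}_3$; at $\tilde\pi=\pi_k$ only $\mathfrak{D}_1$ is forced to zero, whereas $\mathfrak{D}_2$ and $\mathfrak{D}_3$ both collapse to the self-robustness of $\pi_k$ to the nuisance maps $\phi_m$, which is not zero in general. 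I would therefore close the argument using the structure of the update: if the policy step of Algorithm~\ref{Policy iteration algorithm} penalizes only the old--new discrepancy $\mathfrak{D}_1$ (the robustness terms being driven by the adversarial encoder step), then the reference value against which $\pi_{k+1}$ is compared is $L_{\pi_k}(\pi_k)-C\,\mathfrak{D}_1\big|_{\tilde\pi=\pi_k}=\eta(\pi_k)$, and the one-step improvement is immediate. Alternatively, I would show that the adversarial half of each iteration keeps $\mathfrak{D}_2$ and $\mathfrak{D}_3$ non-increasing in $k$, so that the residual robustness penalty never reverses the gain in $\eta$. Pinning the anchor value to $\eta(\pi_k)$ by either route closes the induction and delivers $\eta(\pi_k)\ge\cdots\ge\eta(\pi_0)$.
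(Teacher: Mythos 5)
There is a genuine gap, and it sits exactly where you located the ``main obstacle.'' You read Algorithm~\ref{Policy iteration algorithm} as an \emph{unconstrained} maximization of the surrogate $L_{\pi_i}(\pi)-\eta(\pi_i)-M_{\pi_i}(\pi)$ and then tried to close the induction TRPO-style, by comparing the maximizer's objective value against the anchor's. As you yourself observe, that comparison cannot work here: at the anchor the penalty is $M_{\pi_i}(\pi_i)=C\bigl(\sqrt{\mathfrak{D}_2}+\sqrt{\mathfrak{D}_3}\bigr)^2$, which does not vanish because $\mathfrak{D}_2$ and $\mathfrak{D}_3$ measure the self-robustness of $\pi_i$ to the nuisance maps $\phi_m$ rather than an old--new discrepancy. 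So your route yields at best $\eta(\pi_{i+1})\geq\eta(\pi_i)-M_{\pi_i}(\pi_i)$, which does not prove monotonicity. Neither of your proposed repairs closes this. Penalizing only $\mathfrak{D}_1$ in the update changes the algorithm, and even then the bound \eqref{non-decreasing} that converts surrogate improvement into true improvement still carries the full penalty $M_{\pi_i}(\pi_{i+1})$, leaving a positive residual. Showing $\mathfrak{D}_2,\mathfrak{D}_3$ non-increasing in $k$ likewise leaves the uncancelled $-M_{\pi_i}(\pi_i)$ term in place: a shrinking penalty is still a penalty, and $\eta$ may decrease under it. Note also that your phrase ``the anchor $\pi_k$ is itself a feasible candidate'' is false for the paper's actual program, since $L_{\pi_k}(\pi_k)-\eta(\pi_k)=0<M_{\pi_k}(\pi_k)$ in general---another face of the same problem.

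What you missed is that Algorithm~\ref{Policy iteration algorithm} is a \emph{constrained} problem: the update is required to satisfy $L_{\pi_i}(\pi)-\eta(\pi_i)\geq M_{\pi_i}(\pi)$, and this constraint is precisely what substitutes for the vanishing-penalty-at-the-anchor property of classical TRPO. The paper's proof never compares $\pi_{i+1}$ with the anchor at all: feasibility of $\pi_{i+1}$ gives $L_{\pi_i}(\pi_{i+1})-\eta(\pi_i)-M_{\pi_i}(\pi_{i+1})\geq 0$, and inequality \eqref{non-decreasing} then gives $\eta(\pi_{i+1})-\eta(\pi_i)\geq L_{\pi_i}(\pi_{i+1})-\eta(\pi_i)-M_{\pi_i}(\pi_{i+1})\geq 0$; chaining over $i$ finishes the theorem. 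The flip side---and the legitimate insight buried in your attempt---is that because the anchor is generically infeasible, nothing guarantees the feasible set is nonempty, so the theorem is conditional on the algorithm actually producing the sequence; this is a real weakness of the idealized algorithm that the paper itself flags by calling it idealized, but it is not a gap in the stated proof.
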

\begin{proof}
	According to inequality (\ref{non-decreasing}) and Algorithm \ref{Policy iteration algorithm}, we have
	\begin{equation}
		\eta(\pi_{i+1})-\eta(\pi_i)\geq L_{\pi_i}(\pi_{i+1})-\eta(\pi_i)-M_{\pi_i}(\pi_{i+1})\geq0,
	\end{equation}
	where $i=0,1,\dots,k-1$, so that 
	$\eta(\pi_{i+1})\geq\eta(\pi_i)$, concluding the proof.
\end{proof}

\begin{algorithm}[!t]
	\caption{Policy iteration algorithm guaranteeing non-decreasing training performance $\eta$}
	\label{Policy iteration algorithm}
	\begin{algorithmic}[1]
		\STATE {\bfseries Initialize:} policy $\pi_0$
		\FOR{$i=0,1,2,\dots$}
		\STATE Solve the constrained optimization problem through
		\begin{equation*}
			\begin{split}
				\pi_{i+1}\leftarrow\mathop{\arg\max}_{\pi}\enspace&L_{\pi_i}(\pi)-\eta(\pi_i)-M_{\pi_i}(\pi)\\
				\text{s.t.}\hspace{10.5pt}\enspace&L_{\pi_i}(\pi)-\eta(\pi_i)\geq M_{\pi_i}(\pi)\\
			\end{split}
		\end{equation*}
		\ENDFOR
	\end{algorithmic}
\end{algorithm}

On the other hand, it is evident that
\begin{equation}
	\eta(\pi_{i+1})-\frac{2r_{\max}}{1-\gamma}\cdot(1-M)\geq\eta(\pi_i)-\frac{2r_{\max}}{1-\gamma}\cdot(1-M),
\end{equation}
which means through the iterative process of Algorithm \ref{Policy iteration algorithm}, we optimize the lower bound of generalization performance (\ref{Generalization performance lower bound}) as well. In fact, if both $\mathfrak{D}_2$ and $\mathfrak{D}_3$ are always equal to zero, i.e., given any $m,\tilde{m}\in\mathcal{M}$ and $u\in\mathcal{U}$, we have $\pi_{i}(\cdot|\phi_m(u))=\pi_{i}(\cdot|\phi_{\tilde{m}}(u)),\forall i\in\mathbb{N}$. In this case, the agent has complete insight into the underlying semantics without any influence from $m$, thus the agent is essentially interacting with $m^*=\left(\mathcal{U},\mathcal{A},r,\mathcal{P},\rho,\gamma\right)$, Theorem \ref{lower bound 2} degenerates into Theorem \ref{schulman2015trust}.

However, Algorithm \ref{Policy iteration algorithm} is an idealized approach, we have to adopt some heuristic approximations in practical solutions. In the following section, we will discuss the specific details of these approximations and introduce our proposed dual-agent adversarial framework to overcome the difficulty in optimizing the lower bound (\ref{non-decreasing}), which constitutes the core of this paper.

\section{Methodology}
In the previous section, we derived the lower bound of training performance, which inspires us to optimize the part of the policy that determines robustness. Therefore, in this section, we first analyze the optimization problem of parameterized policies (Section \ref{Optimization of Parameterized Policies}), then deconstruct what properties a generalization agent should have (Section \ref{How to Achieve Good Generalization?}), and finally propose a dual-agent adversarial framework to solve the generalization problem (Section \ref{subsec:adv}).

\subsection{Optimization of Parameterized Policies}\label{Optimization of Parameterized Policies}
We first consider the parameterized policies, i.e., $\pi_{\theta}$, and denote the upstream encoder of the policy network as $\psi_{w}$, where $w$ and $\theta$ represent the parameters of the encoder and policy network, respectively.

For any given state $s=\phi_m(u)$, for brevity, we denote $\bar{s}_m=\psi_{w}(\phi_m(u))$ as the representation input into the policy network $\pi_{\theta}$ after passing through the encoder $\psi_{w}$. Similar to TRPO \citep{schulman2015trust}, the total variational distance and KL divergence satisfy $D_{\mathrm{TV}}^{\max}\left[\pi_{\theta_{\mathrm{old}}}(\cdot|\bar{s}_m),\pi_{\theta}(\cdot|\bar{s}_m)\right]^2\leq D_{\mathrm{KL}}^{\max}\left[\pi_{\theta_{\mathrm{old}}}(\cdot|\bar{s}_m),\pi_{\theta}(\cdot|\bar{s}_m)\right]$, where $\theta_{\mathrm{old}}$ represents the policy network parameters before the update, while $\theta$ represents the current policy network parameters. Through heuristic approximation, the maximum KL divergence $D_{\mathrm{KL}}^{\max}$ is approximated as the average KL divergence $\mathbb{E}\left[D_{\mathrm{KL}}\right]$, and then Algorithm \ref{Policy iteration algorithm} is approximated as the following constrained optimization problem:
\begin{equation}\label{constrained optimization problem 1}
	\begin{split}
		\max_{\theta}\enspace &J(\theta)=L_{\theta_{\mathrm{old}}}(\theta)-\eta(\theta_{\mathrm{old}}), \\
		\text{s.t.}\hspace{2.5pt}\enspace &
		\begin{cases}
			\mathbb{E}_{m\sim p_{\mathcal{M}_{\mathrm{train}}}(\cdot)}\left\{D_{\mathrm{KL}}\left[\pi_{\theta_{\mathrm{old}}}(\cdot|\bar{s}_m),\pi_{\theta}(\cdot|\bar{s}_m)\right]\right\}\leq\delta_1,\\
			\mathbb{E}_{m,\tilde{m}\sim p_{\mathcal{M}_{\mathrm{train}}}(\cdot)}\left\{D_{\mathrm{KL}}\left[\pi_{\theta}(\cdot|\bar{s}_m),\pi_{\theta}(\cdot|\bar{s}_{\tilde{m}})\right]\right\}\leq\delta_2,\\
		\end{cases}\\
	\end{split}
\end{equation}
where $m$ and $\tilde{m}$ are MDPs independently sampled from the distribution $p_{\mathcal{M}_{\mathrm{train}}}$. Then, similar to TRPO, $J(\theta)$ can be expressed as
\begin{equation}
	\begin{split}
		J(\theta)&=\mathbb{E}_{m,s;a\sim\pi_{\theta}(\cdot|\bar{s}_m)}\left[A_m^{\pi}(s,a)\right]\\&=\mathbb{E}_{m,s;a\sim\pi_{\theta_{\mathrm{old}}}(\cdot|\bar{s}_m)}\left[\frac{\pi_{\theta}(a|\bar{s}_m)}{\pi_{\theta_{\mathrm{old}}}(a|\bar{s}_m)}\cdot A_m^{\pi}(s,a)\right],\\
	\end{split}
\end{equation}
which is called importance sampling, where $m\sim p_{\mathcal{M}_{\mathrm{train}}}(\cdot)$ and $s\sim\rho_{\pi_{\theta_{\mathrm{old}}}}^m(\cdot)$. Thus, we can further transform the constrained optimization problem (\ref{constrained optimization problem 1}) into the following form:
\begin{equation}\label{constrained optimization problem 2}
	\begin{split}
		\max_{\theta}\enspace &J(\theta)=\mathbb{E}_{m,s;a\sim\pi_{\theta_{\mathrm{old}}}(\cdot|\bar{s}_m)}\left[\frac{\pi_{\theta}(a|\bar{s}_m)}{\pi_{\theta_{\mathrm{old}}}(a|\bar{s}_m)}\cdot\hat{A}(s,a)\right], \\
		\text{s.t.}\hspace{2.5pt}\enspace &
		\begin{cases}
			\mathbb{E}_{m\sim p_{\mathcal{M}_{\mathrm{train}}}(\cdot)}\left\{D_{\mathrm{KL}}\left[\pi_{\theta_{\mathrm{old}}}(\cdot|\bar{s}_m),\pi_{\theta}(\cdot|\bar{s}_m)\right]\right\}\leq\delta_1,\\
			\mathbb{E}_{m,\tilde{m}\sim p_{\mathcal{M}_{\mathrm{train}}}(\cdot)}\left\{D_{\mathrm{KL}}\left[\pi_{\theta}(\cdot|\bar{s}_m),\pi_{\theta}(\cdot|\bar{s}_{\tilde{m}})\right]\right\}\leq\delta_2,\\
		\end{cases}\\
	\end{split}
\end{equation}
where $\hat{A}(s,a)$ is the estimation of the advantage function, and in this paper, we adopt the GAE \citep{schulman2015high} technique. The first constraint of (\ref{constrained optimization problem 2}) measures the difference between the old and new policies, where TRPO \citep{schulman2015trust} and PPO \citep{schulman2017proximal} have already provided corresponding solutions. However, it's important to note that the second constraint in (\ref{constrained optimization problem 2}) can not be approximated, as it involves different states with the same underlying semantics, and predicting another $\phi_{\tilde{m}}(u)$ based on any received state $\phi_{m}(u)$ ($m\neq\tilde{m}$) is untraceble.

Hence, understanding different states with the same underlying semantics is the most central challenge in the generalization of deep reinforcement learning. In the following section, we will systematically discuss the characteristics a sufficiently general agent should possess to achieve good generalization performance.

\subsection{How to Achieve Good Generalization?}\label{How to Achieve Good Generalization?}
As discussed previously, it is unable to solve the optimization problem (\ref{constrained optimization problem 2}) directly, as the expectation $\mathbb{E}_{m,\tilde{m}\sim p_{\mathcal{M}_{\mathrm{train}}}(\cdot)}\left\{D_{\mathrm{KL}}\left[\pi_{\theta}(\cdot|\bar{s}_m),\pi_{\theta}(\cdot|\bar{s}_{\tilde{m}})\right]\right\}$ cannot be estimated due to the unknown distribution $p_{\mathcal{M}_{\mathrm{train}}}$ and function $\phi_m$. In this section, we focus on analyzing the characteristics that a generalization agent should possess.

\begin{figure*}[!t]
	\centering
	\subfigure[\textbf{The impact of biases on generalization.} Among them, the areas enclosed by the red and green curves represent the space of $\mathcal{M}_{\mathrm{train}}$ and $\mathcal{M}$, respectively, and the area enclosed by the pink curve represents the bias that is beneficial to the generalization of the model, as it is more aligned with $\mathcal{M}$. The area enclosed by the gray curve represents the bias that may affect the model's generalization, as it is less aligned with $\mathcal{M}$.]{
		\includegraphics[scale=0.38]{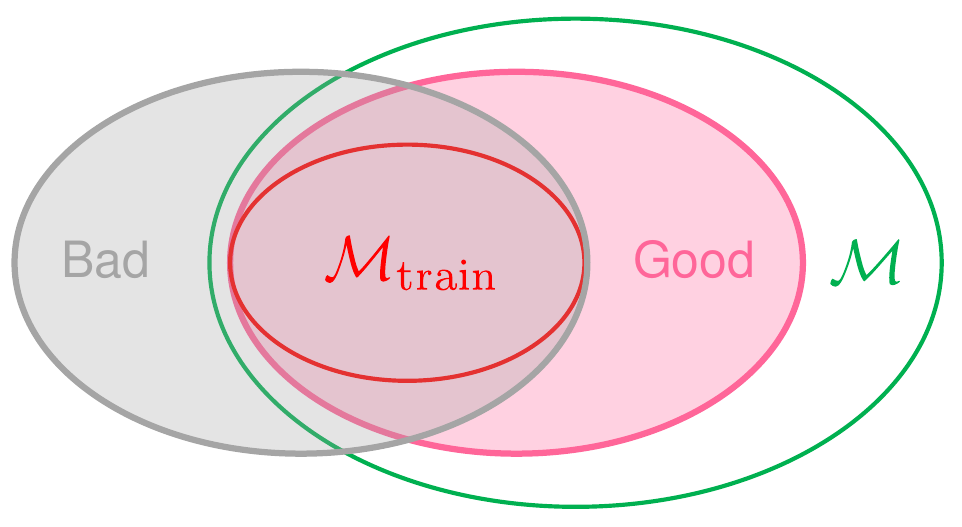}
	}\qquad
	\subfigure[\textbf{The impact of reward functions on generalization.} We build a simple maze environment where an agent represented by a green square starts from the starting point and can receive a reward by reaching the endpoint. However, there is a possibility that the agent may enter the red zone. The reward functions are set as follows: the agent receives a reward of $-1$ after entering the red zone (left), and the agent receives a reward of $0$ after entering the red zone (right). It is evident that in the left environment, the positional information of the red zone is useful to the agent, while in the right environment, the positional information of the red zone can be ignored by the agent. Thus, the agent should have different representations for the two environments.]{
		\includegraphics[scale=1.2]{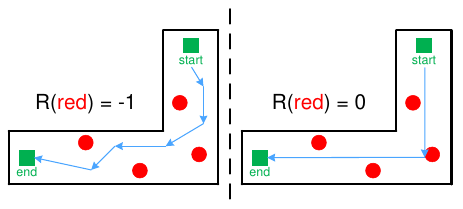}
	}
	\caption{The impacts of biases and reward functions on generalization.}\label{impact}
\end{figure*}

Although estimating different states with the same semantics during training is challenging, one effective approach to explicitly learn the underlying semantics is to introduce the adversarial method. For instance, \citet{rahman2023adversarial} aims to maximize expected return while minimizing the interference from adversarial examples generated by its own generator, StarGAN \citep{choi2018stargan}. This process ultimately facilitates robust policy learning and helps prevent the agent from overfitting irrelevant features in high-dimensional observations, inspiring us to incorporate an adversarial framework into our approach (Section \ref{subsec:adv}).

However, StarGAN does not entirely eliminate the biases introduced by human prior knowledge. Specifically, the domain of the original input image is clustered using a Gaussian Mixture Model (GMM), which inherently introduces biases from the GMM. Furthermore, the number of clusters is often determined empirically, adding another layer of human influence.

Therefore, firstly, a sufficiently general agent should spontaneously learn robust representations for irrelevant features, rather than relying on biases introduced by human prior knowledge. Figure \ref{impact} (a) shows the potential impact of introducing biases into the model. Secondly, the entire pipeline for learning generalization must integrate the RL process, as the identification of irrelevant features is closely linked to the objectives of RL, particularly the configuration of the reward function. Figure \ref{impact} (b) demonstrates how different reward functions influence the agent's recognition of irrelevant information within a simple maze environment.

In summary, we conclude that a sufficiently general agent should possess two characteristics:
\begin{itemize}
	\item The agent is able to spontaneously learn robust representations for high-dimensional input without introducing any bias that benefits from human prior knowledge.
	\item The agent should adaptively adjust its representation of underlying semantics in response to changes in the reward function, demonstrating the ability to identify the semantics corresponding to specific objectives.
\end{itemize}

Given these two points, we will introduce a dual-agent adversarial framework in the following section, which empowers agents with enhanced generalization capabilities.

\section{Experiments}

\subsection{Experimental Settings}
\textbf{Benchmark.} Procgen \citep{cobbe2020leveraging} is an environment library specifically designed for reinforcement learning research, developed by OpenAI. It provides a diverse and procedurally generated set of platform games, allowing researchers to test the generalization capabilities of agents across different tasks and scenarios.

\textbf{Baselines.} We verify the performance of our proposed method compared with PPO \citep{schulman2017proximal} and DAAC \citep{raileanu2021decoupling} as the baselines for our comparative experiments. 

\textbf{Training Settings.} In all experiments, we use the hyperparameters provided in the Appendix unless otherwise specified. We referred to the original paper for hyperparameters specific to the algorithm. Following the recommendations of \citet{cobbe2020leveraging}, we run these methods on hard-level generalization tasks, training on eight environments of 500 levels and evaluating generalization performance on the full distribution of levels.
We interact for 50M steps to consider running time. This is sufficient to assess the performance differences between our method and other baselines.

\subsection{Adversarial Policy Learning Framework}
\label{subsec:adv}

In the previous analysis, we summarized the core challenges in the generalization of RL (Section \ref{Optimization of Parameterized Policies}) and the characteristics a general agent should possess (Section \ref{How to Achieve Good Generalization?}). However, generating adversarial samples through generative models introduces additional hyperparameters and training costs, and relies on carefully designed model architecture. 

To address these issues, a viable solution is to attack the agent's encoder instead of directly generating adversarial samples. In this section, we introduce a dual-agent adversarial framework, which involves a game process between two homogeneous agents, as shown in Figure \ref{framework}.

\begin{figure}[!t]
	\centering
	\includegraphics[width=0.5\linewidth]{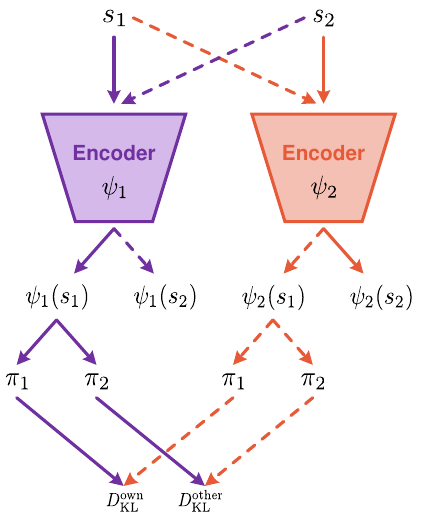}
	\caption{\textbf{Adversarial policy learning.} $\psi_1$ and $\pi_1$ represent the encoder and policy network of agent 1, while $\psi_2$ and $\pi_2$ represent the encoder and policy network of agent 2. $s_1$ and $s_2$ represent the training data for agent 1 and agent 2, respectively.}\label{framework}
\end{figure}

In particular, two symmetric agents are introduced in this framework, both agents have the capability to utilize their respective training data and update the other agent's encoder through backpropagation, which empowers them to perform adversarial attacks on each other. Since the two agents are equivalent in status, we take the perspective of agent 1 as an example. Agent 1 inputs its training data $s_1$ into both its own encoder and the other agent's encoder, obtaining $\psi_1(s_1)$ and $\psi_2(s_1)$, resulting in different representations of the same state $s_1$. The adversarial framework consists of two processes:

\textbf{Adversarial Attack on Opponent Agent.} To prevent the opponent agent from producing good actions, agent 1 attempts to alter the parameters of both encoders to influence agent 2's decision-making, where the KL divergence is used to quantify this distributional perturbation:
\begin{equation}\label{kl other}
	D_{\mathrm{KL}}^{\mathrm{other}}=D_{\mathrm{KL}}\left[\pi_{2}(\cdot|\psi_2(s_1)),\pi_{2}(\cdot|\psi_1(s_1))\right].
\end{equation}

\textbf{Robust Defense Against Adversarial Threats.} Meanwhile, agent 1 itself attempts to remain robust to this influence, which can be expressed as
\begin{equation}\label{kl own}
	D_{\mathrm{KL}}^{\mathrm{own}}=D_{\mathrm{KL}}\left[\pi_{1}(\cdot|\psi_1(s_1)),\pi_{1}(\cdot|\psi_2(s_1))\right].
\end{equation}
It should be noted that when agent 1 is performing adversarial attacks on agent 2's encoder \( \psi_2 \), the parameters of agent 2's policy network \( \pi_2 \) are frozen during this stage, thus do not participate in gradient updates. 

Overall, the goal of the agent is to maximize the perturbation $D_{\mathrm{KL}}^{\mathrm{other}}$ while minimizing the self-inference $D_{\mathrm{KL}}^{\mathrm{own}}$, resulting in the loss function of the form:
\begin{equation}\label{kl loss}
	\mathcal{L}_{\mathrm{KL}}=D_{\mathrm{KL}}^{\mathrm{own}}-D_{\mathrm{KL}}^{\mathrm{other}}.
\end{equation}
Since the adversarial process is coupled with the RL training process, the total loss is defined as
\begin{equation}\label{loss}
	\mathcal{L}=\mathcal{L}_{\mathrm{RL}}+\alpha \mathcal{L}_{\mathrm{KL}},
\end{equation}
where $\mathcal{L}_{\mathrm{RL}}$ is the loss function using a specific RL algorithm, $\alpha$ is the only additional hyperparameter. As the two agents are equivalent, the training processes for both agents are completely symmetrical. The pseudo-code of the adversarial policy learning process is shown in Algorithm \ref{Adversarial policy learning}.

The overall loss comprises two components: the reinforcement learning loss term $\mathcal{L}_{\mathrm{RL}}$ and the adversarial loss term $\mathcal{L}_{\mathrm{KL}}$. The adversarial loss facilitates a competitive interaction among agents and functions similarly to a form of regularization, effectively preventing agents from overfitting to irrelevant features in high-dimensional observations. With the alternate updating of the two agents, they will have to consider the truly useful underlying semantics, leading to better generalization performance, or mathematically speaking, a lower $\mathbb{E}_{m,\tilde{m}\sim p_{\mathcal{M}_{\mathrm{train}}}(\cdot)}\left\{D_{\mathrm{KL}}\left[\pi_{\theta}(\cdot|\bar{s}_m),\pi_{\theta}(\cdot|\bar{s}_{\tilde{m}})\right]\right\}$ in constrained optimization problem (\ref{constrained optimization problem 2}).

\begin{algorithm}[!t]
	\caption{Dual-agent adversarial policy learning}
	\label{Adversarial policy learning}
	\begin{algorithmic}[1]
		\STATE {\bfseries Initialize:} Agent 1's encoder and policy $\psi_1,\pi_1$, agent 2's encoder and policy $\psi_2,\pi_2$
		\STATE {\bfseries Initialize:} Gradient descent optimizer $\mathcal{O}$
		\WHILE{training}
		\FOR{$i=1,2$}
		\STATE Collect data $\mathcal{D}_i$ using agent $i$
		\STATE Calculate RL loss for agent $i$: $\mathcal{L}_{\mathrm{RL}}\leftarrow\mathrm{PPO}(\mathcal{D}_i)$
		\STATE Calculate KL loss for agent $i$: $\mathcal{L}_{\mathrm{KL}}\leftarrow D_{\mathrm{KL}}^{\mathrm{own}}-D_{\mathrm{KL}}^{\mathrm{other}}$ according to Equation (\ref{kl loss})
		\STATE Calculate total loss for agent $i$: $\mathcal{L}\leftarrow\mathcal{L}_{\mathrm{RL}}+\alpha \mathcal{L}_{\mathrm{KL}}$
		\STATE Update $\psi_i,\pi_i,\psi_{3-i}\leftarrow\mathcal{O}(\mathcal{L},\psi_i,\pi_i,\psi_{3-i})$
		\ENDFOR
		\ENDWHILE
	\end{algorithmic}
\end{algorithm}

\begin{figure*}[!t]
	\centering
	\includegraphics[scale=0.45]{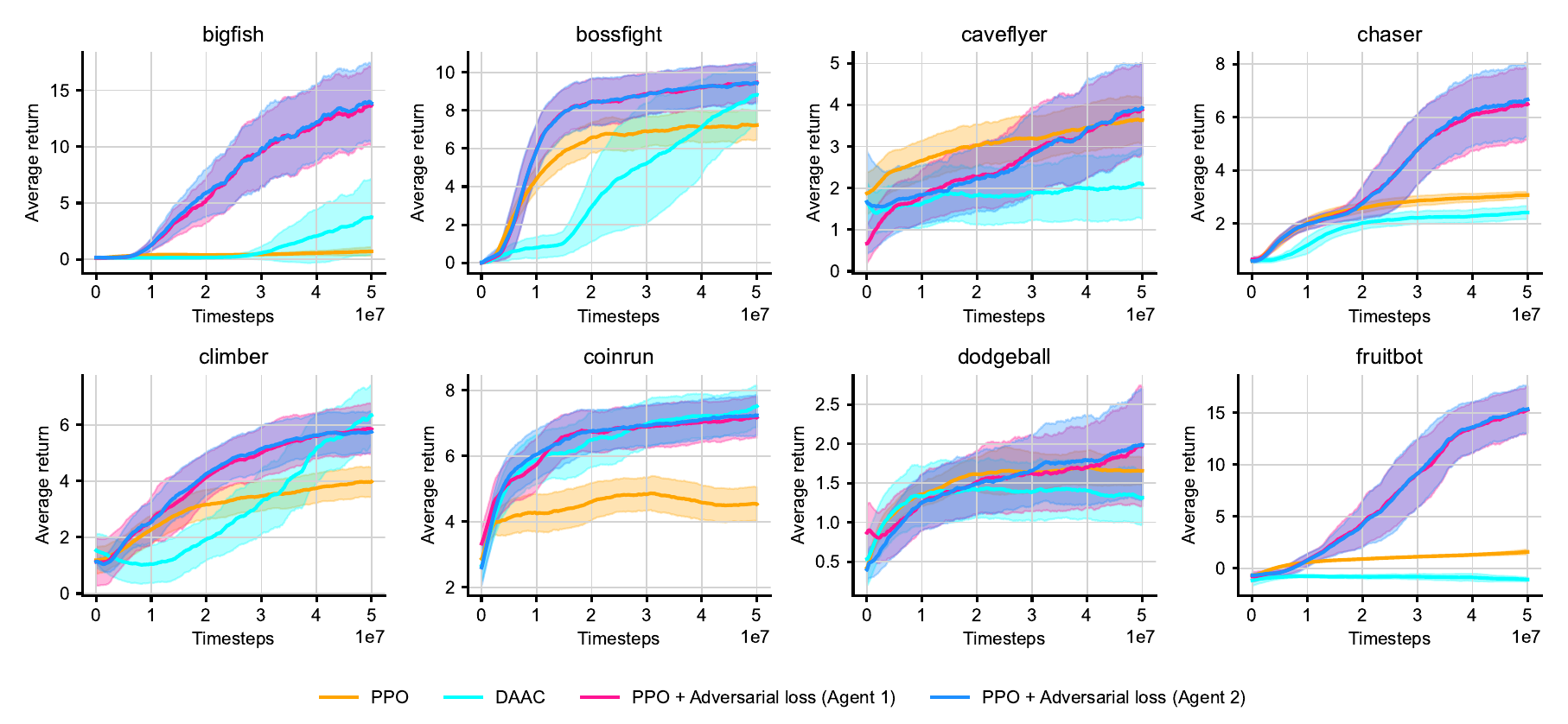}
	\caption{\textbf{Test performance curves of each method on eight hard-level Procgen games.} Each agent is trained on 500 training levels for 50M environment steps and evaluated on the full distribution of levels. The mean and standard deviation is shown across three seeds.}\label{test performance}
\end{figure*}

\begin{table*}[!t]
	\centering
	\footnotesize
	\begin{tabular}{c|cccc}
		\toprule
		Environment $\backslash$ Method & PPO &DAAC & \textbf{PPO + Adv. (Agent 1)} & \textbf{PPO + Adv. (Agent 2)}\\
		\midrule
		bigfish &0.485 &1.193 &7.904 &\bf8.117 \\
		bossfight &6.196 &4.655 &7.957 &\bf7.970 \\
		caveflyer &\bf3.162 &1.852 &2.768 &2.731 \\
		chaser &2.634 &1.955 &4.215 &\bf4.270 \\
		climber &3.233 &3.299 &4.473 &\bf4.520\\
		coinrun &4.592 &6.735 &6.710 &\bf6.803 \\
		dodgeball &1.587 &1.380 &1.554 &\bf1.593 \\
		fruitbot &1.037 &-0.860 &8.027 &\bf8.050 \\
		\midrule
		\textit{Average Score} &2.866 &2.526 &5.451 &\bf5.507 \\
		\bottomrule
	\end{tabular}
	\caption{\textbf{Average generalization performance} of PPO, DAAC and PPO with our adversarial loss on eight hard-level Procgen games during the entire training process. The average return are shown across three seeds.}\label{average performance}
\end{table*}

In summary, our proposed adversarial policy learning framework is well in line with the two characteristics proposed in Section \ref{How to Achieve Good Generalization?}:
\begin{itemize}
	\item First, the framework does not introduce any additional biases, allowing the agents to learn the underlying semantics spontaneously.
	\item Second, the adversarial process and the reinforcement learning process are highly coupled, which means that the dependency between the reward signal and the corresponding representation can be modeled well.
\end{itemize}

\section{Related Work}
\textbf{Generalizable RL Methods.} The generalization problem of deep reinforcement learning has been well studied, and previous work has pointed out the overfitting problem in deep reinforcement learning \citep{rajeswaran2017towards, zhang2018study, justesen2018illuminating, packer2018assessing, song2019observational, cobbe2019quantifying, grigsby2020measuring, cobbe2020leveraging, yuan2024rl}. Data augmentation methods are considered as effective solutions for enhancing the generalization of agents. Directly integrating existing data augmentation methods with RL algorithms can yield improvements \citep{laskin2020reinforcement, kostrikov2020image, zhang2021generalization, raileanu2021automatic}. Domain randomization techniques \citep{tobin2017domain, yue2019domain, slaoui2019robust, lee2019network, mehta2020active, li2021domain} inject random disturbances representing variations in the simulated environment during the training process of RL, effectively enhancing the adaptability of RL agents to unknown environments.

\textbf{Adversarial Learning Methods.} Adversarial learning has been proven to be a powerful learning framework \citep{goodfellow2014generative, jiang2020robust, dong2020adversarial}. For instance, combining adversarial learning with randomization to enhance the generalization performance of agents \citep{pinto2017robust, li2021domain, rahman2023adversarial}. In addition, adversarial attacks are also used to improve the robustness and generalization performance of agents \citep{gleave2019adversarial, oikarinen2021robust}.

\section{Conclusion}
This paper introduces a dual-agent adversarial framework designed to tackle the challenges of generalization in reinforcement learning. By incorporating a competitive process between two agents, our framework leverages adversarial loss to enable both agents to spontaneously learn effective representations of high-dimensional observations, resulting in robust policies that effectively handle irrelevant features. Extensive experimental results demonstrate that this framework significantly enhances both the training and generalization performance of baseline RL algorithms. Our findings indicate that the adversarial approach not only improves the resilience of RL agents but also represents a meaningful advancement in the quest for generalizable reinforcement learning solutions.

\bibliography{aaai2026}



\newpage
\appendix
\onecolumn
\section{Training Results}
\begin{figure}[!h]
	\centering
	\includegraphics[scale=0.45]{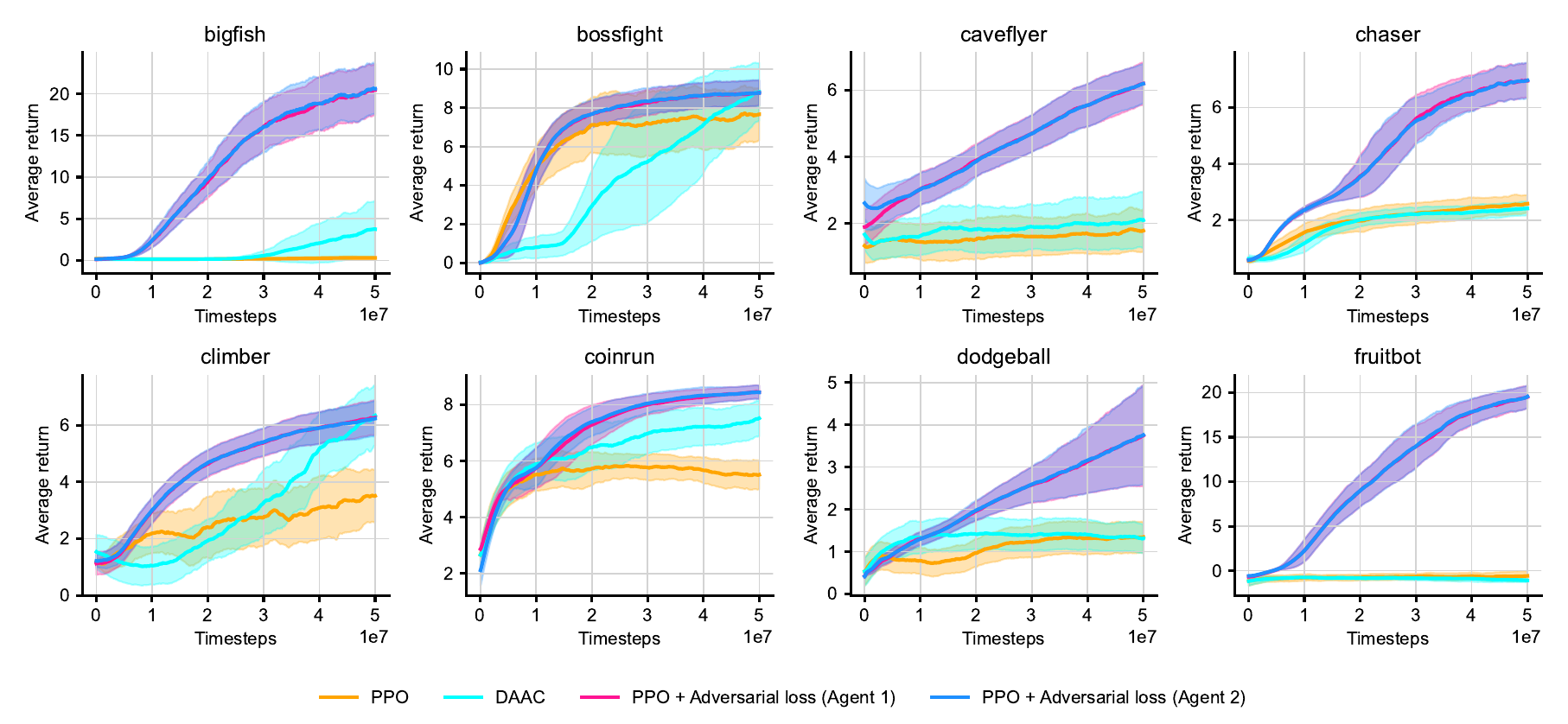}
	\caption{\textbf{Train performance curves of each method on eight hard-level Procgen games.}}
\end{figure}

\section{Hyperparameter Settings}\label{hyperparameters}
\begin{table}[!h]
	\centering
	\tiny
	\caption{Detailed hyperparameters in Procgen.}
	\begin{tabular}{c|ccc}
		\toprule
		Hyperparameters & PPO \citep{schulman2017proximal}&DAAC \citep{raileanu2021decoupling} & PPO with adversarial loss (ours) \\
		\midrule
		Environments per worker & 64 & 64&64  \\
		Workers & 4 & 4 &4  \\
		Horizon & 256& 256& 256 \\
		Learning rate& $5\times10^{-4}$& $5\times10^{-4}$ & $5\times10^{-4}$  \\
		Learning rate decay & No& No & No \\
		Optimizer & Adam & Adam & Adam \\
		Total steps & 50M & 50M & 50M \\
		Batch size & 16384 & 16384 & 16384 \\
		Update epochs &3 & - &3 \\
		Mini-batches & 8 & 8 & 8 \\
		Mini-batch size &  2048 & 2048 &  2048 \\
		GAE parameter $\lambda$  & 0.95 & 0.95 & 0.95\\
		Discount factor $\gamma$  & 0.999 & 0.999 & 0.999\\
		Value loss coefficient $c_1$ & 0.5 & - & 0.5 \\
		Entropy loss coefficient $c_2$ & 0.01 & 0.01 & 0.01 \\
		Probability ratio parameter $\epsilon$ & 0.2 & 0.2 & 0.2 \\
		KL loss coefficient $\alpha$ & - & - & 1.0 \\
		Advantage loss coefficient $\alpha_a$ & - & 0.25 & -\\
		Policy update epochs $E_{\pi}$ &- & 1 &- \\
		Value update epochs $E_V$ &- & 9 &- \\
		Value updates after a policy update $N_{\pi}$ & -& 1& -\\
		\bottomrule
	\end{tabular}
\end{table}

\section{Proofs}
\subsection{Proof of Assumption \ref{Assumption of M_train}}\label{proof 1}
We now prove that $p_{\mathcal{M}_{\mathrm{train}}}(m)$ is a probability distribution, by integrating it, we obtain
\begin{equation}
	\begin{split}
		\int_{\mathcal{M}_{\mathrm{train}}}p_{\mathcal{M}_{\mathrm{train}}}(m)\mathrm{d}m=&\int_{\mathcal{M}_{\mathrm{train}}}\frac{p_{\mathcal{M}}(m)\cdot\mathbb{I}\left(m\in\mathcal{M}_{\mathrm{train}}\right)}{M}\mathrm{d}m\\
		=&\frac{1}{M}\int_{\mathcal{M}_{\mathrm{train}}}p_{\mathcal{M}}(m)\cdot\mathbb{I}\left(m\in\mathcal{M}_{\mathrm{train}}\right)\mathrm{d}m\\
		=&\frac{1}{M}\int_{\mathcal{M}_{\mathrm{train}}}p_{\mathcal{M}}(m)\mathrm{d}m\\
		=&1,\\
	\end{split}
\end{equation}
concluding the proof.

\newpage
\subsection{Proof of Theorem \ref{lower bound 1}}\label{proof 2}
We are trying to measure the difference between $\zeta(\pi)$ and $\eta(\pi)$, which is
\begin{equation}
	\begin{split}
		&\left|\zeta(\pi)-\eta(\pi)\right|\\
		=&\left|\mathbb{E}_{m\sim p_\mathcal{M}(\cdot),\tau_m\sim\pi}\left[\sum_{t=0}^{\infty}\gamma^tr_m(s_t^m,a_t^m)\right]-\mathbb{E}_{m\sim p_{\mathcal{M}_{\mathrm{train}}}(\cdot),\tau_m\sim\pi}\left[\sum_{t=0}^{\infty}\gamma^tr_m(s_t^m,a_t^m)\right]\right|\\
		=&\left|\mathbb{E}_{m\sim p_\mathcal{M}(\cdot)}\left\{\mathbb{E}_{\tau_m\sim\pi}\left[\sum_{t=0}^{\infty}\gamma^tr_m(s_t^m,a_t^m)\right]\right\}-\mathbb{E}_{m\sim p_{\mathcal{M}_{\mathrm{train}}}(\cdot)}\left\{\mathbb{E}_{\tau_m\sim\pi}\left[\sum_{t=0}^{\infty}\gamma^tr_m(s_t^m,a_t^m)\right]\right\}\right|.\\
	\end{split}
\end{equation}

First, we denote $\mathbb{E}_{\tau_m\sim\pi}\left[\sum_{t=0}^{\infty}\gamma^tr_m(s_t^m,a_t^m)\right]$ as $g_m(\pi)$, then
\begin{equation}
	\begin{split}
		\left|g_m(\pi)\right|&=\left|\mathbb{E}_{\tau_m\sim\pi}\left[\sum_{t=0}^{\infty}\gamma^tr_m(s_t^m,a_t^m)\right]\right|\\
		&=\left|\sum_{t=0}^{\infty}\sum_{s}\mathbb{P}(s_t^m=s|m,\pi)\sum_{a}\pi(a|s)\cdot\gamma^tr_m(s,a)\right|\\
		&=\left|\sum_{t=0}^{\infty}\gamma^t\sum_{s}\mathbb{P}(s_t^m=s|m,\pi)\sum_{a}\pi(a|s)\cdot r_m(s,a)\right|\\
		&\leq\sum_{t=0}^{\infty}\gamma^t\cdot\max_{m,s,a}\left|r_m(s,a)\right|\\
		&=\frac{r_{\max}}{1-\gamma}.
	\end{split}
\end{equation}

Second, according to Assumption \ref{Assumption of M_train}, we have
\begin{equation}
	\begin{split}
		&\left|\zeta(\pi)-\eta(\pi)\right|\\
		=&\left|\mathbb{E}_{m\sim p_\mathcal{M}(\cdot)}\left[g_m(\pi)\right]-\mathbb{E}_{m\sim p_{\mathcal{M}_{\mathrm{train}}}(\cdot)}\left[g_m(\pi)\right]\right|\\
		=&\left|\int_{\mathcal{M}}p_\mathcal{M}(m)g_m(\pi)\mathrm{d}m-\int_{\mathcal{M}_{\mathrm{train}}}p_{\mathcal{M}_{\mathrm{train}}}(m)g_m(\pi)\mathrm{d}m\right|\\
		=&\left|\int_{\mathcal{M}_{\mathrm{train}}}p_\mathcal{M}(m)g_m(\pi)\mathrm{d}m-\int_{\mathcal{M}_{\mathrm{train}}}p_{\mathcal{M}_{\mathrm{train}}}(m)g_m(\pi)\mathrm{d}m+\int_{\mathcal{M}-\mathcal{M}_{\mathrm{train}}}p_\mathcal{M}(m)g_m(\pi)\mathrm{d}m\right| \\
		=&\left|\left(1-\frac{1}{M}\right)\int_{\mathcal{M}_{\mathrm{train}}}p_\mathcal{M}(m)g_m(\pi)\mathrm{d}m+\int_{\mathcal{M}-\mathcal{M}_{\mathrm{train}}}p_\mathcal{M}(m)g_m(\pi)\mathrm{d}m\right| \\
		\leq&\left|\left(1-\frac{1}{M}\right)\int_{\mathcal{M}_{\mathrm{train}}}p_\mathcal{M}(m)g_m(\pi)\mathrm{d}m\right|+\left|\int_{\mathcal{M}-\mathcal{M}_{\mathrm{train}}}p_\mathcal{M}(m)g_m(\pi)\mathrm{d}m\right| \\
		\leq&\left(\frac{1}{M}-1\right)\cdot\frac{r_{\max}}{1-\gamma}\cdot\int_{\mathcal{M}_{\mathrm{train}}}p_\mathcal{M}(m)\mathrm{d}m+\frac{r_{\max}}{1-\gamma}\cdot\int_{\mathcal{M}-\mathcal{M}_{\mathrm{train}}}p_\mathcal{M}(m)\mathrm{d}m \\
		=&\left(\frac{1}{M}-1\right)\cdot\frac{r_{\max}}{1-\gamma}\cdot M+\frac{r_{\max}}{1-\gamma}\cdot\left(1-M\right)\\
		=&\frac{2r_{\max}}{1-\gamma}\cdot(1-M).\\
	\end{split}
\end{equation}
Theorem \ref{lower bound 1} follows.

\newpage
\subsection{Proof of Theorem \ref{lower bound 2}}\label{proof 3}
Let's start with Theorem \ref{schulman2015trust} \citep{schulman2015trust}, through a simple extension, by adding expectation $\mathbb{E}_{m\sim p_{\mathcal{M}_{\mathrm{train}}}(\cdot)}$ to the left and right sides of Theorem \ref{schulman2015trust}, we can derive the following lemma:
\begin{lemma}\label{lemma 1}
	Let $m\sim p_{\mathcal{M}_{\mathrm{train}}(\cdot)}$, given any two policies, $\pi$ and $\tilde{\pi}$, the following bound holds:
	\begin{equation}
		\eta(\tilde{\pi})\geq L_{\pi}(\tilde{\pi})-\frac{4\gamma A_{\max}}{(1-\gamma)^2}\cdot\mathbb{E}_{m\sim p_{\mathcal{M}_{\mathrm{train}}}(\cdot)}\left\{D_{\mathrm{TV}}^{\max}\left[\pi(\cdot|\phi_m(u)),\tilde{\pi}(\cdot|\phi_m(u))\right]^2\right\},
	\end{equation}
	where $A_{\max}=\max_{m,s,a}\left|A_m^{\pi}(s,a)\right|$ and
	\begin{equation}
		\begin{split}
			\eta(\tilde{\pi})&=\eta(\pi)+\mathbb{E}_{m\sim p_{\mathcal{M}_{\mathrm{train}}}(\cdot),s\sim\rho_{\tilde{\pi}}^m(\cdot),a\sim\tilde{\pi}(\cdot|s)}\left[A_m^{\pi}(s,a)\right],\\
			L_{\pi}(\tilde{\pi})&=\eta(\pi)+\mathbb{E}_{m\sim p_{\mathcal{M}_{\mathrm{train}}}(\cdot),s\sim\rho_{\pi}^m(\cdot),a\sim\tilde{\pi}(\cdot|s)}\left[A_m^{\pi}(s,a)\right].\\
		\end{split}
	\end{equation}
\end{lemma}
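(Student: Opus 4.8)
The plan is to read Theorem \ref{schulman2015trust} as a per-MDP statement and then average over $m \sim p_{\mathcal{M}_{\mathrm{train}}}(\cdot)$. For a fixed $m$, the tuple $\left(\mathcal{S}_m,\mathcal{A},r_m,\mathcal{P}_m,\rho_m,\gamma\right)$ is an ordinary MDP, so Theorem \ref{schulman2015trust} applies verbatim with the $m$-specific objects. Writing $g_m(\pi)=\mathbb{E}_{\tau_m\sim\pi}\left[\sum_{t=0}^{\infty}\gamma^t r_m(s_t^m,a_t^m)\right]$ for the per-MDP return (so that $\eta(\pi)=\mathbb{E}_{m\sim p_{\mathcal{M}_{\mathrm{train}}}(\cdot)}\left[g_m(\pi)\right]$), the discounted visitation $\rho_{\pi}^m$, and the advantage $A_m^{\pi}$, it yields the pointwise bound
\[
g_m(\tilde{\pi})\geq g_m(\pi)+\mathbb{E}_{s\sim\rho_{\pi}^m(\cdot),a\sim\tilde{\pi}(\cdot|s)}\left[A_m^{\pi}(s,a)\right]-\frac{4\gamma\max_{s,a}\left|A_m^{\pi}(s,a)\right|}{(1-\gamma)^2}\cdot\left(\max_s D_{\mathrm{TV}}\left[\pi(\cdot|s),\tilde{\pi}(\cdot|s)\right]\right)^2.
\]

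First I would apply $\mathbb{E}_{m\sim p_{\mathcal{M}_{\mathrm{train}}}(\cdot)}$ to both sides; since the inequality holds for every $m$ and expectation is monotone, it is preserved. On the left, $\mathbb{E}_m\left[g_m(\tilde{\pi})\right]=\eta(\tilde{\pi})$ by definition of the training objective. On the first two terms of the right, linearity of expectation together with the tower property recovers exactly $L_{\pi}(\tilde{\pi})=\eta(\pi)+\mathbb{E}_{m,s\sim\rho_{\pi}^m,a\sim\tilde{\pi}}\left[A_m^{\pi}(s,a)\right]$ as stated in the lemma, because $\mathbb{E}_m\left[g_m(\pi)\right]=\eta(\pi)$ and the nested expectation collapses into the joint one.

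Next I would handle the error term, which is where the one genuine subtlety lies: the coefficient $\max_{s,a}\left|A_m^{\pi}(s,a)\right|$ depends on $m$ and so cannot be factored out of the $m$-expectation as a constant. I would resolve this by bounding it uniformly, $\max_{s,a}\left|A_m^{\pi}(s,a)\right|\leq A_{\max}$ for every $m$, and, since the squared total-variation factor is nonnegative, this pointwise bound lets me pull the constant $4\gamma A_{\max}/(1-\gamma)^2$ outside the expectation. This is the only inequality introduced beyond the base theorem, and it is precisely the step that makes the uniform quantity $A_{\max}=\max_{m,s,a}\left|A_m^{\pi}(s,a)\right|$ appear in the statement.

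Finally, I would rewrite the inner maximum in the paper's notation using the decomposition $s=\phi_m(u)$: the state space $\mathcal{S}_m$ is indexed by $u\in\mathcal{U}$, so $\max_{s\in\mathcal{S}_m}D_{\mathrm{TV}}\left[\pi(\cdot|s),\tilde{\pi}(\cdot|s)\right]=\max_u D_{\mathrm{TV}}\left[\pi(\cdot|\phi_m(u)),\tilde{\pi}(\cdot|\phi_m(u))\right]=D_{\mathrm{TV}}^{\max}\left[\pi(\cdot|\phi_m(u)),\tilde{\pi}(\cdot|\phi_m(u))\right]$. Substituting this back gives exactly the claimed bound with the factor $\mathbb{E}_{m\sim p_{\mathcal{M}_{\mathrm{train}}}(\cdot)}\left\{D_{\mathrm{TV}}^{\max}\left[\pi(\cdot|\phi_m(u)),\tilde{\pi}(\cdot|\phi_m(u))\right]^2\right\}$. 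The overall argument is thus an ``expectation wrapper'' around the single-MDP TRPO bound, and the main obstacle is merely the replacement of the $m$-dependent advantage bound by the uniform $A_{\max}$.
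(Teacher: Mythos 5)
Your proposal is correct and follows essentially the same route as the paper: apply the single-MDP TRPO bound (Theorem \ref{schulman2015trust}) for each fixed $m$, take the expectation over $m\sim p_{\mathcal{M}_{\mathrm{train}}}(\cdot)$, and replace the $m$-dependent coefficient $\max_{s,a}\left|A_m^{\pi}(s,a)\right|$ by the uniform bound $A_{\max}$ to pull the constant out of the expectation. The only cosmetic difference is that the paper carries the absolute-value form $\left|\eta(\tilde{\pi})-L_{\pi}(\tilde{\pi})\right|$ and invokes Jensen's inequality, whereas you propagate the one-sided inequality directly via monotonicity of expectation; the substance is identical.
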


\begin{proof}
	According to Theorem \ref{schulman2015trust}, given any $m$, we have
	\begin{equation}
		\begin{split}
			\left|\mathbb{E}_{s\sim\rho_{\tilde{\pi}}^m(\cdot),a\sim\tilde{\pi}(\cdot|s)}\left[A_m^{\pi}(s,a)\right]-\mathbb{E}_{s\sim\rho_{\pi}^m(\cdot),a\sim\tilde{\pi}(\cdot|s)}\left[A_m^{\pi}(s,a)\right]\right|\\
			\leq\frac{4\gamma\max_{s,a}\left|A_{m}^{\pi}(s,a)\right|}{(1-\gamma)^2}\cdot D_{\mathrm{TV}}^{\max}\left[\pi(\cdot|\phi_m(u)),\tilde{\pi}(\cdot|\phi_m(u))\right]^2,\\
		\end{split}
	\end{equation}
	where the notation $D_{\mathrm{TV}}^{\max}(\cdot)=\max_{u}D_{\mathrm{TV}}(\cdot)$. Then
	\begin{equation}
		\begin{split}
			&\left|\eta(\tilde{\pi})-L_{\pi}(\tilde{\pi})\right|\\
			=&\left|\mathbb{E}_{m\sim p_{\mathcal{M}_{\mathrm{train}}}(\cdot),s\sim\rho_{\tilde{\pi}}^m(\cdot),a\sim\tilde{\pi}(\cdot|s)}\left[A_m^{\pi}(s,a)\right]-\mathbb{E}_{m\sim p_{\mathcal{M}_{\mathrm{train}}}(\cdot),s\sim\rho_{\pi}^m(\cdot),a\sim\tilde{\pi}(\cdot|s)}\left[A_m^{\pi}(s,a)\right]\right|\\
			=&\left|\mathbb{E}_{m\sim p_{\mathcal{M}_{\mathrm{train}}}(\cdot)}\left\{\mathbb{E}_{s\sim\rho_{\tilde{\pi}}^m(\cdot),a\sim\tilde{\pi}(\cdot|s)}\left[A_m^{\pi}(s,a)\right]-\mathbb{E}_{s\sim\rho_{\pi}^m(\cdot),a\sim\tilde{\pi}(\cdot|s)}\left[A_m^{\pi}(s,a)\right]\right\}\right|\\
			\leq&\mathbb{E}_{m\sim p_{\mathcal{M}_{\mathrm{train}}}(\cdot)}\left\{\left|\mathbb{E}_{s\sim\rho_{\tilde{\pi}}^m(\cdot),a\sim\tilde{\pi}(\cdot|s)}\left[A_m^{\pi}(s,a)\right]-\mathbb{E}_{s\sim\rho_{\pi}^m(\cdot),a\sim\tilde{\pi}(\cdot|s)}\left[A_m^{\pi}(s,a)\right]\right|\right\}\\
			\leq&\mathbb{E}_{m\sim p_{\mathcal{M}_{\mathrm{train}}}(\cdot)}\left\{\frac{4\gamma\max_{s,a}\left|A_{m}^{\pi}(s,a)\right|}{(1-\gamma)^2}\cdot D_{\mathrm{TV}}^{\max}\left[\pi(\cdot|\phi_m(u)),\tilde{\pi}(\cdot|\phi_m(u))\right]^2\right\}\\
			\leq&\frac{4\gamma A_{\max}}{(1-\gamma)^2}\cdot\mathbb{E}_{m\sim p_{\mathcal{M}_{\mathrm{train}}}(\cdot)}\left\{D_{\mathrm{TV}}^{\max}\left[\pi(\cdot|\phi_m(u)),\tilde{\pi}(\cdot|\phi_m(u))\right]^2\right\},\\
		\end{split}
	\end{equation}
	Lemma \ref{lemma 1} follows.
\end{proof}

Since the expectation of any constant is still this constant, i.e., $\mathbb{E}\left[c\right]=c$ , we have
\begin{equation}
	\begin{split}
		&\mathbb{E}_{m\sim p_{\mathcal{M}_{\mathrm{train}}}(\cdot)}\left\{D_{\mathrm{TV}}^{\max}\left[\pi(\cdot|\phi_m(u)),\tilde{\pi}(\cdot|\phi_m(u))\right]^2\right\}\\
		=&\mathbb{E}_{\tilde{m}\sim p_{\mathcal{M}_{\mathrm{train}}}(\cdot)}\left\{\mathbb{E}_{m\sim p_{\mathcal{M}_{\mathrm{train}}}(\cdot)}\left\{D_{\mathrm{TV}}^{\max}\left[\pi(\cdot|\phi_m(u)),\tilde{\pi}(\cdot|\phi_m(u))\right]^2\right\}\right\}\\
		=&\mathbb{E}_{m,\tilde{m}\sim p_{\mathcal{M}_{\mathrm{train}}}(\cdot)}\left\{D_{\mathrm{TV}}^{\max}\left[\pi(\cdot|\phi_m(u)),\tilde{\pi}(\cdot|\phi_m(u))\right]^2\right\},\\
	\end{split}
\end{equation}
thus
\begin{equation}\label{eq 1}
	\eta(\tilde{\pi})\geq L_{\pi}(\tilde{\pi})-\frac{4\gamma A_{\max}}{(1-\gamma)^2}\cdot\mathbb{E}_{m,\tilde{m}\sim p_{\mathcal{M}_{\mathrm{train}}}(\cdot)}\left\{D_{\mathrm{TV}}^{\max}\left[\pi(\cdot|\phi_m(u)),\tilde{\pi}(\cdot|\phi_{m}(u))\right]^2\right\}.
\end{equation}

Now, denote $u^*=\arg\max_{u}
D_{\mathrm{TV}}\left[\pi(\cdot|\phi_m(u)),\tilde{\pi}(\cdot|\phi_{m}(u))\right]^2$, and based on the triangle inequality for total variation distance, we have
\begin{equation}
	\begin{split}
		&\mathbb{E}_{m,\tilde{m}\sim p_{\mathcal{M}_{\mathrm{train}}}(\cdot)}\left\{D_{\mathrm{TV}}^{\max}\left[\pi(\cdot|\phi_m(u)),\tilde{\pi}(\cdot|\phi_{m}(u))\right]^2\right\}\\
		=&\mathbb{E}_{m,\tilde{m}\sim p_{\mathcal{M}_{\mathrm{train}}}(\cdot)}\left\{D_{\mathrm{TV}}\left[\pi(\cdot|\phi_m(u^*)),\tilde{\pi}(\cdot|\phi_{m}(u^*))\right]^2\right\}\\
		\leq&\mathbb{E}_{m,\tilde{m}\sim p_{\mathcal{M}_{\mathrm{train}}}(\cdot)}\Big\{\big(D_{\mathrm{TV}}\left[\pi(\cdot|\phi_m(u^*)),\pi(\cdot|\phi_{\tilde{m}}(u^*))\right]+D_{\mathrm{TV}}\left[\pi(\cdot|\phi_{\tilde{m}}(u^*)),\tilde{\pi}(\cdot|\phi_{\tilde{m}}(u^*))\right]+\\
		+&D_{\mathrm{TV}}\left[\tilde{\pi}(\cdot|\phi_m(u^*)),\tilde{\pi}(\cdot|\phi_{\tilde{m}}(u^*))\right]\big)^2\Big\},\\
	\end{split}
\end{equation}
so that
\begin{equation}
	\begin{split}
		&\mathbb{E}_{m,\tilde{m}\sim p_{\mathcal{M}_{\mathrm{train}}}(\cdot)}\Big\{D_{\mathrm{TV}}^{\max}\left[\pi(\cdot|\phi_m(u)),\tilde{\pi}(\cdot|\phi_{m}(u))\right]^2\Big\}\\
		\leq&\mathbb{E}_{m,\tilde{m}\sim p_{\mathcal{M}_{\mathrm{train}}}(\cdot)}\Big\{D_{\mathrm{TV}}\left[\pi(\cdot|\phi_m(u^*)),\pi(\cdot|\phi_{\tilde{m}}(u^*))\right]^2\Big\}\\
		+&\mathbb{E}_{m,\tilde{m}\sim p_{\mathcal{M}_{\mathrm{train}}}(\cdot)}\Big\{D_{\mathrm{TV}}\left[\pi(\cdot|\phi_{\tilde{m}}(u^*)),\tilde{\pi}(\cdot|\phi_{\tilde{m}}(u^*))\right]^2\Big\}\\
		+&\mathbb{E}_{m,\tilde{m}\sim p_{\mathcal{M}_{\mathrm{train}}}(\cdot)}\Big\{D_{\mathrm{TV}}\left[\tilde{\pi}(\cdot|\phi_m(u^*)),\tilde{\pi}(\cdot|\phi_{\tilde{m}}(u^*))\right]^2\Big\}\\
		+&2\mathbb{E}_{m,\tilde{m}\sim p_{\mathcal{M}_{\mathrm{train}}}(\cdot)}\Big\{D_{\mathrm{TV}}\left[\pi(\cdot|\phi_m(u^*)),\pi(\cdot|\phi_{\tilde{m}}(u^*))\right]\cdot D_{\mathrm{TV}}\left[\pi(\cdot|\phi_{\tilde{m}}(u^*)),\tilde{\pi}(\cdot|\phi_{\tilde{m}}(u^*))\right]\Big\}\\
		+&2\mathbb{E}_{m,\tilde{m}\sim p_{\mathcal{M}_{\mathrm{train}}}(\cdot)}\Big\{D_{\mathrm{TV}}\left[\pi(\cdot|\phi_m(u^*)),\pi(\cdot|\phi_{\tilde{m}}(u^*))\right]\cdot D_{\mathrm{TV}}\left[\tilde{\pi}(\cdot|\phi_m(u^*)),\tilde{\pi}(\cdot|\phi_{\tilde{m}}(u^*))\right]\Big\}\\
		+&2\mathbb{E}_{m,\tilde{m}\sim p_{\mathcal{M}_{\mathrm{train}}}(\cdot)}\Big\{D_{\mathrm{TV}}\left[\pi(\cdot|\phi_{\tilde{m}}(u^*)),\tilde{\pi}(\cdot|\phi_{\tilde{m}}(u^*))\right]\cdot D_{\mathrm{TV}}\left[\tilde{\pi}(\cdot|\phi_m(u^*)),\tilde{\pi}(\cdot|\phi_{\tilde{m}}(u^*))\right]\Big\}.\\
	\end{split}
\end{equation}
Next, according to the Cauchy-Schwarz inequality, i.e., $X$ and $Y$ are two positive random variables, then $\mathbb{E}\left[XY\right]\leq\sqrt{\mathbb{E}\left[X^2\right]\cdot\mathbb{E}\left[Y^2\right]}$, we obtain
\begin{equation}\label{eq 2}
	\begin{split}
		&\mathbb{E}_{m,\tilde{m}\sim p_{\mathcal{M}_{\mathrm{train}}}(\cdot)}\left\{D_{\mathrm{TV}}^{\max}\left[\pi(\cdot|\phi_m(u)),\tilde{\pi}(\cdot|\phi_{m}(u))\right]^2\right\}\\
		\leq&\mathbb{E}_{m,\tilde{m}\sim p_{\mathcal{M}_{\mathrm{train}}}(\cdot)}\left\{D_{\mathrm{TV}}\left[\pi(\cdot|\phi_m(u^*)),\pi(\cdot|\phi_{\tilde{m}}(u^*))\right]^2\right\}\\
		+&\mathbb{E}_{m,\tilde{m}\sim p_{\mathcal{M}_{\mathrm{train}}}(\cdot)}\left\{D_{\mathrm{TV}}\left[\pi(\cdot|\phi_{\tilde{m}}(u^*)),\tilde{\pi}(\cdot|\phi_{\tilde{m}}(u^*))\right]^2\right\}\\
		+&\mathbb{E}_{m,\tilde{m}\sim p_{\mathcal{M}_{\mathrm{train}}}(\cdot)}\left\{D_{\mathrm{TV}}\left[\tilde{\pi}(\cdot|\phi_m(u^*)),\tilde{\pi}(\cdot|\phi_{\tilde{m}}(u^*))\right]^2\right\}\\
		+&2\sqrt{\mathbb{E}_{m,\tilde{m}\sim p_{\mathcal{M}_{\mathrm{train}}}(\cdot)}\left\{D_{\mathrm{TV}}\left[\pi(\cdot|\phi_m(u^*)),\pi(\cdot|\phi_{\tilde{m}}(u^*))\right]^2\right\}\cdot \mathbb{E}_{m,\tilde{m}\sim p_{\mathcal{M}_{\mathrm{train}}}(\cdot)}\left\{D_{\mathrm{TV}}\left[\pi(\cdot|\phi_{\tilde{m}}(u^*)),\tilde{\pi}(\cdot|\phi_{\tilde{m}}(u^*))\right]^2\right\}}\\
		+&2\sqrt{\mathbb{E}_{m,\tilde{m}\sim p_{\mathcal{M}_{\mathrm{train}}}(\cdot)}\left\{D_{\mathrm{TV}}\left[\pi(\cdot|\phi_m(u^*)),\pi(\cdot|\phi_{\tilde{m}}(u^*))\right]^2\right\}\cdot \mathbb{E}_{m,\tilde{m}\sim p_{\mathcal{M}_{\mathrm{train}}}(\cdot)}\left\{D_{\mathrm{TV}}\left[\tilde{\pi}(\cdot|\phi_m(u^*)),\tilde{\pi}(\cdot|\phi_{\tilde{m}}(u^*))\right]^2\right\}}\\
		+&2\sqrt{\mathbb{E}_{m,\tilde{m}\sim p_{\mathcal{M}_{\mathrm{train}}}(\cdot)}\left\{D_{\mathrm{TV}}\left[\pi(\cdot|\phi_{\tilde{m}}(u^*)),\tilde{\pi}(\cdot|\phi_{\tilde{m}}(u^*))\right]^2\right\}\cdot \mathbb{E}_{m,\tilde{m}\sim p_{\mathcal{M}_{\mathrm{train}}}(\cdot)}\left\{D_{\mathrm{TV}}\left[\tilde{\pi}(\cdot|\phi_m(u^*)),\tilde{\pi}(\cdot|\phi_{\tilde{m}}(u^*))\right]^2\right\}}\\
		\leq&\mathfrak{D}_2+\mathfrak{D}_1+\mathfrak{D}_3+2\sqrt{\mathfrak{D}_2\mathfrak{D}_1}+2\sqrt{\mathfrak{D}_2\mathfrak{D}_3}+2\sqrt{\mathfrak{D}_1\mathfrak{D}_3}\\
		=&\left(\sqrt{\mathfrak{D}_1}+\sqrt{\mathfrak{D}_2}+\sqrt{\mathfrak{D}_3}\right)^2.\\
	\end{split}
\end{equation}

Finally, by combining the inequality (\ref{eq 1}) and inequality (\ref{eq 2}), we derive
\begin{equation}
	\eta(\tilde{\pi})\geq L_{\pi}(\tilde{\pi})-\frac{4\gamma A_{\max}}{(1-\gamma)^2}\cdot\left(\sqrt{\mathfrak{D}_1}+\sqrt{\mathfrak{D}_2}+\sqrt{\mathfrak{D}_3}\right)^2,
\end{equation}
concluding the proof of Theorem \ref{lower bound 2}.

\end{document}